  \providecommand\BibTeX{{%
    \normalfont B\kern-0.5em{\scshape i\kern-0.25em b}\kern-0.8em\TeX}}}
\newcommand{\vx}{{\mathbf{x}}}
\newcommand{\vy}{{\mathbf{y}}}
\begin{document}

\title{Feature Overcorrelation in  Deep Graph Neural Networks: \\ A New Perspective}

\author{Wei Jin}
\affiliation{%
  \institution{Michigan State University}
}
\email{jinwei2@msu.edu}

\author{Xiaorui Liu}
\affiliation{%
  \institution{Michigan State University}
}
\email{xiaorui@msu.edu}

\author{Yao Ma}
\affiliation{%
  \institution{New Jersey Institute of Technology}
}
\email{yao.ma@njit.edu}

\author{Charu Aggarwal}
\affiliation{%
  \institution{IBM T.J. Watson Research Center}
}
\email{charu@us.ibm.com}
\author{Jiliang Tang}
\affiliation{%
  \institution{ Michigan State University}
}
\email{tangjili@msu.edu}

\begin{abstract}
Recent years have witnessed remarkable success achieved by graph neural networks (GNNs) in many real-world applications such as recommendation and drug discovery. Despite the success, oversmoothing has been identified as one of the key issues which limit the performance of deep GNNs. It indicates that the learned node representations are highly indistinguishable due to the stacked aggregators. In this paper, we propose a new perspective to look at the performance degradation of deep GNNs, i.e., feature overcorrelation. Through empirical and theoretical study on this matter, we demonstrate the existence of feature overcorrelation in deeper GNNs and reveal potential reasons leading to this issue. To reduce the feature correlation, we propose a general framework DeCorr which can encourage GNNs to encode less redundant information.
Extensive experiments have demonstrated that DeCorr can help enable deeper GNNs and is complementary to existing techniques tackling the oversmoothing issue\footnote{Code  is available at \url{https://github.com/ChandlerBang/DeCorr}.}.

\end{abstract}

\keywords{Graph Neural Networks, Semi-supervised Learning, Deep Models}

\maketitle

\section{introduction}

Graphs describe pairwise relations between entities for real-world data from various domains, which are playing an increasingly important role in many applications, including node classification~\cite{kipf2016semi}, recommender systems~\cite{fan2019graph}, drug discovery~\cite{duvenaud2015convolutional} and single cell analysis~\cite{wen2022graph}. To facilitate these applications, it is particularly important to extract effective representations for graphs. In recent years, graph neural networks (GNNs) have achieved tremendous success in representation learning on graphs~\cite{zhou2018graph-survey, wu2019comprehensive-survey}. Most GNNs follow a message-passing mechanism to learn a node representation by propagating and transforming representations of its neighbors~\cite{gilmer2017neural}, which significantly helps them in capturing the complex information of graph data.

Despite the promising results, it has been observed that deeply stacking GNN layers often results in significant performance deterioration~\cite{li2018deeper-oversmooth,zhao2020pairnorm}. Hence, to enable larger receptive filed and larger model capacity, increasing efforts have been made on developing deeper GNNs~\cite{zhao2020pairnorm, zhou2020towards,liu2020towards-oversmooth,rong2019dropedge,chen2020simple}. Most of them attribute the performance deterioration to the oversmoothing issue. In other words, the learned node representations become highly indistinguishable when stacking many GNN layers. In this work, we observe a different issue, overcorrelation, which indicates that deeply stacking GNN layers renders the learned feature dimensions highly correlated. High correlation indicates high redundancy and less information encoded by the learned dimensions, which can harm the performance of downstream tasks.  

We first systematically study the overcorrelation issue in deeper GNNs by answering three questions: (1) does the overcorrelation issue exist? (2) what contributes to the overcorrelation issue? (3) what is the relationship and difference between overcorrelation and oversmoothing? Through exploring these questions, we find that when stacking more GNN layers, generally feature dimensions become more correlated and node representations become more smooth; however they present distinct patterns. Furthermore, through empirical study and theoretical analysis, we show that overcorrelation can be attributed to both propagation and transformation and we further demonstrate that in the extreme case of oversmoothing, the feature dimensions are definitely overcorrelated but not vice versa. In other words, the overcorrelated feature dimensions does not necessarily indicate oversmoothed node representations. These observations suggest that overcorrelation and oversmoothing are related but not identical. Thus, handling overcorrelation has the potential to provide a new and complementary perspective to train deeper GNNs. 

After validating the existence of the overcorrelation issue and understanding its relation with oversmoothing, we aim to reduce the feature correlation and consequently enrich the encoded information for the representations, thus enabling deeper GNNs. In particular, we propose a general framework, DeCorr, to address the overcorrelation issue by introducing an explicit feature decorrelation component and a mutual information maximization component. The explicit feature decorrelation component directly regularizes the correlation on the learned dimensions while the mutual information maximization component encourages the learned representations to preserve a fraction of information from the input features. Our contributions can be summarized as follows:
\begin{compactenum}[(1)]
\item  We introduce a new perspective in deeper GNNs, i.e., feature overcorrelation, and further deepen our understanding on this issue via empirical experiments and theoretical analysis.
\item We propose a general framework to effectively reduce the feature correlation and encourage deeper GNNs to encode less redundant information.
\item Extensive experiments have demonstrated the proposed framework can help enable deeper GNNs and is complementary to existing techniques tackling the oversmoothing issue.
\end{compactenum}

\section{Background and Related Work}
In this section, we introduce the background and related work about graph neural networks. Before that, we first describe key notations used in this paper. We denote a graph as  $\mathcal{G}=({\mathcal{V}},{\mathcal{E}}, {\bf X})$, where $\mathcal{V}=\{v_1, v_2, ..., v_N\}$ is the set of $N$ nodes, $\mathcal{E} \subseteq{\bf\mathcal{V}\times \bf\mathcal{V}}$ is the set of edges describing the relations between nodes, and ${\bf X} \in \mathbb{R}^{N\times d}$ indicates the node feature matrix with $d$ as the number of features. The graph structure can also be described by an adjacency matrix $\mathbf{A} \in \{0,1\}^{N \times N}$ where $\mathbf{A}_{ij}=1$ indicates the existence of an edge between nodes $v_i$ and $v_j$, otherwise $\mathbf{A}_{ij}=0$. Thus, we can also denote a graph as $\mathcal{G}=({\bf A},{\bf X})$.

\subsection{Graph Neural Networks}
\label{sec:concepts}
A graph neural network model usually consists of several GNN layers, where each layer takes the output of the previous layer as the input. Each GNN layer updates the representations of all nodes by propagating and transforming representations of their neighbors. More specifically, the $l$-th GNN layer can be described as follows:
\begin{equation}
\mathbf{H}_{i,:}^{(l)} =  \operatorname{Transform }\left(\operatorname{Propagate}\left(\mathbf{H}_{j,:}^{(l-1)} \mid v_j \in \mathcal{N}(v_i)\cup \{v_i\}\right)\right),
\label{eq:message_pass}
\end{equation}
where ${\bf H}_{i,:}^{(l)}$ denotes the representation for node $v_i$ after $l$-th GNN layer and $\mathcal{N}(v_i)$ is the set of neighboring nodes of node $v_i$. For a $L$ layer graph neural network model, we adopt ${\bf H}^{(L)}$ as the final representation of all nodes, which can be utilized for downstream tasks. For example, for node classification task, we can calculate the discrete label probability distribution for node $v_i$ as follows:
\begin{equation}
\hat{y}_{v_i}=\operatorname{softmax}\left({\bf H}^{(k)}_{i,:}\right),
\end{equation}
where $\hat{y}_{v_i}[j]$ corresponds to the probability of predicting node $v_i$ as the $j$-th class. Graph Convolution Network (GCN)~\cite{kipf2016semi} is one of the most popular graph neural networks. It implements Eq.~\eqref{eq:message_pass} as:
\begin{equation}
\mathbf{H}^{(l)} = {\sigma} \left(\tilde{\bf D}^{-1 / 2}{\bf \tilde{A}} \tilde{\bf D}^{-1 / 2} {\mathbf H}^{(l-1)} {\mathbf W}^{(l)}\right),
\label{eq:gcn}
\end{equation}
where $\tilde{\bf A}={\bf A}+ {\bf I}$ and $\tilde{\bf D}$ is the diagonal matrix of $\tilde{\bf A}$ with $\tilde{\bf D}_{ii} = 1 + \sum_{j} {\bf A}_{ij}$, ${\bf W}^{(l)}$ is the parameter matrix, and $\sigma(\cdot)$ denotes some activation function. In Eq.~\eqref{eq:gcn}, ${\bf W}^{(l)}$ is utilized for feature transformation while $\tilde{\bf D}^{-1 / 2}{\bf \tilde{A}} \tilde{\bf D}^{-1 / 2}$ is involved in feature propagation.  


\subsection{Related Work}
Recent years have witnessed great success achieved by graph neural networks (GNNs) in graph representation learning, which has tremendously advanced various graph tasks~\cite{yan2018spatial,marcheggiani2018exploiting,zitnik2018modeling,guo2021few,liu2021elastic,wang2022improving}. In general, there are two main families of GNN models, i.e. spectral-based methods and spatial-based methods. The spectral-based GNNs utilize graph convolution based on graph spectral theory~\cite{shuman2013emerging} to learn node representations~\cite{bruna2013spectral, henaff2015deep,ChebNet,kipf2016semi}, while spatial-based GNNs update the node representation by aggregating and transforming information from its neighbors~\cite{gat,hamilton2017inductive, gilmer2017neural}. Furthermore, there is significant progress in self-supervised GNNs~\cite{you2021graph,jin2022automated,wang2022graph} and graph data augmentation~\cite{zhao2021data,ding2022data,zhao2022graph}. For a thorough review of GNNs, we please refer the reader to recent surveys~\cite{zhou2018graph-survey,wu2019comprehensive-survey}. 

However, recent studies have revealed that deeply stacking GNN layers can lead to significant performance deterioration, which is often attributed to the oversmoothing issue~\cite{zhao2020pairnorm,chen2020measuring}, i.e, the learned node representations become highly indistinguishable. To address the oversmoothing issue and enable deeper GNNs, various methods have been proposed~\cite{zhao2020pairnorm,chen2020measuring,zhou2020towards,rong2019dropedge,chen2020simple,li2019deepgcns,li2021training}. As an example, PairNorm~\cite{zhao2020pairnorm} is proposed to keep the total pairwise distance of node representations constant through a normalization layer. Similarly, DGN~\cite{zhou2020towards} also normalizes the node representation by normalizing  each group of similar nodes independently to maintain the group distance ratio and instance information gain. Different from the normalization methods, DropEdge~\cite{rong2019dropedge} proposes to randomly drop a certain amount of edges from the graph, which has been shown to alleviate both oversmoothing and overfitting. However, most of the works targets at solving oversmoothing  while overlooking the feature overcorrelation issue. In this paper, we perform a systematical study on the overcorrelation issue and provide effective solution to tackle it. We also provide the connections between the previous methods and overcorrelation in Section~\ref{sec:revisit}.

\section{Preliminary Study}
In this section, we investigate the issues of overcorrelation in deep graph neural networks through both empirical study and theoretical analysis. We observe that overcorrelation and oversmoothing are different though they could be related. 

\subsection{Overcorrelation and Oversmoothing}
\label{sec:pre-study}

In this subsection, we demonstrate that stacking multiple graph neural network layers can sharply increase the correlation among feature dimensions. We choose one popular correlation measure, pearson correlation coefficient~\cite{benesty2009pearson}, to evaluate the correlation between the learned dimensions in deep GNNs. Specifically, given two vectors ${\bf x}\in \mathbb{R}^{N}$ and ${\bf y} \in  \mathbb{R}^{N}$, the pearson correlation coefficient between them can be formulated as follows:
\begin{equation}
    \rho({\bf x}, {\bf y}) = \frac{\sum_{i=1}^N\left({ x}_{i}-\bar{{x}}\right)\left({ y}_{i}-\bar{{ y}}\right)}{\sqrt{\sum_{i=1}^N\left({ x}_{i}-\bar{{ x}}\right)^{2} \sum_{i=1}^N\left({ y}_{i}-\bar{{ y}}\right)^{2}}},
\end{equation}
where $\bar{x}$ and $\bar{y}$ denote the mean value of ${\bf x}$ and ${\bf y}$, respectively. Essentially, pearson correlation coefficient normalizes the covariance between the two variables and measures how much two variables are linearly related to each other. The value of $ \rho({\bf x}, {\bf y})$ ranges from -1 to 1 -- high absolute values of pearson correlation coefficient indicate that the variables are highly correlated and vice versa. Furthermore, we propose the metric $Corr$ to measure the correlation among all learned dimension pairs in the representation ${\bf X}\in \mathbb{R}^{N\times{d}}$ as,
\begin{equation}
    Corr({\bf X}) = \frac{1}{d(d-1)}\sum_{i\neq{j}} |p({\bf X}_{:,i}, {\bf X}_{:,j})| \quad i, j \in [1,2,\ldots, d],
\end{equation}
where ${\bf X}_{:,i}$ denotes the $i$-th column of ${\bf X}$. Since we are also interested in the oversmoothing issue, we use the metric $SMV$ proposed in~\cite{liu2020towards-oversmooth}, which uses normalized node representations to compute their Euclidean distance:
\begin{align}
    SMV(\mathbf{X}) = \frac{1}{N(N-1)} \sum_{i\neq{j}}  D(\mathbf{X}_{i,:}, \mathbf{X}_{j,:}),
\end{align}
where $D(\cdot,\cdot)$ is the normalized Euclidean distance between two vectors:
\begin{equation}
   D(\vx ,\vy) = \frac{1}{2} \Big \| \frac{\vx}{\|\vx\|} - \frac{\vy}{\|\vy\|} \Big \|_2.
\end{equation}
The smaller $SMV$ is, the smoother the node representations are.
Furthermore, it is worth noting that, (1) both $Corr$ and $SMV$ are in the range of $[0,1]$; and (2) they are different measures from two perspectives -- $Corr$ measures \textit{dimension-wise} correlation while $SMV$ measures \textit{node-wise} smoothness.

\begin{figure}[t]%
     \centering
     \subfloat[Cora]{{\includegraphics[width=0.5\linewidth]{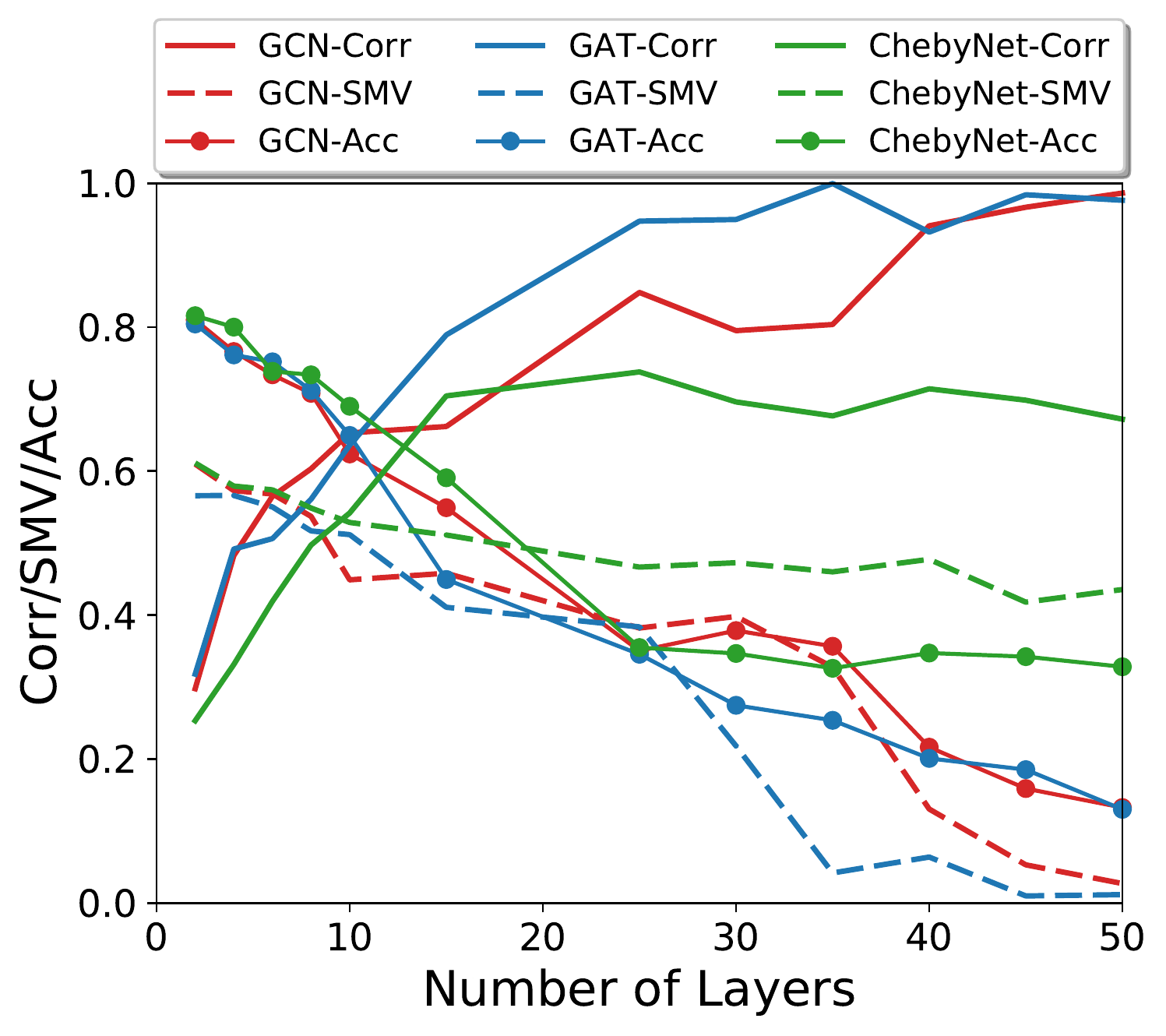} }}%
      \subfloat[Citeseer]{{\includegraphics[width=0.5\linewidth]{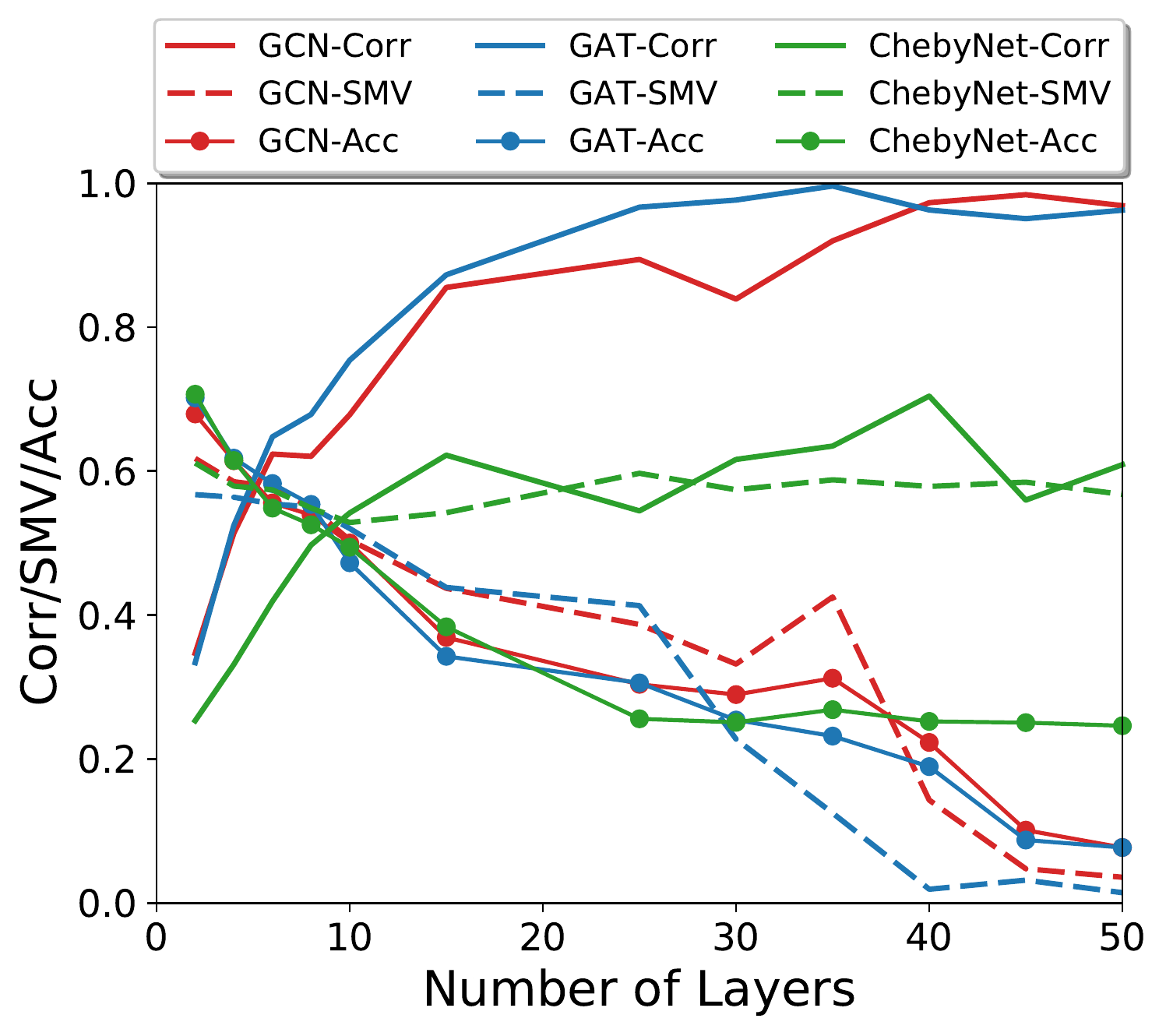} }}%
    \vskip -1em
\caption{Correlation and smoothness of learned representations for three GNNs together with their test accuracy. The larger $Corr$ is, the more correlated the dimensions are; while the smaller $SMV$ is, the smoother the representations are. } 
\label{fig:corr_sim}%
\vskip -1em
\end{figure}

Based on the aforementioned metrics, we investigate the overcorrelation and oversmoothing issues in three representative GNN models, i.e., GCN~\cite{kipf2016semi}, GAT~\cite{gat} and ChebyNet~\cite{ChebNet}. Specifically, we vary the depth of the three models from $2$ to $50$ and calculate the values of correlation ($Corr$) and smoothness ($SMV$) of the final representations. All experiments are repeated $10$ times with different random seeds and we demonstrate the average values in Figure~\ref{fig:corr_sim}. We also plot the results for Pubmed and Coauthor in Figure~\ref{fig:appendix_pubmed_cs} in Append A.1. From the figures, we make following observations:
\begin{compactenum}[(1)]
    \item When the number of layers increases, the correlation among dimensions increases significantly. For example, for 50-layer GCN and GAT, the $Corr$ values on the two datasets are larger than 0.95, which shows extremely high redundancy in the learned dimensions of deep GNNs. We note that \textit{The patterns are even more evident for GCN on Pubmed dataset.}
    \item For the first a few layers (i.e., the number of layers is smaller than 10), the $Corr$ values increase sharply (roughly from 0.3 to 0.6) together with the drop of test accuracy.  However, the $SMV$ values do not change much (roughly from 0.6 to 0.5). That can explain the observation in Section~\ref{sec:mainresults} why methods addressing oversmoothing perform worse than shallow GNNs while our method tackling overcorrelation can outperform them. 
    \item In general, with the increase of the number of layers,  the learned representations becomes more correlated and more smoothing. However they show very different patterns. This observation suggests that overcorrelation and oversmoothing are different though related. Thus, addressing them can help deeper GNNs from different perspectives and can be complementary. This is validated by our empirical results in Section~\ref{sec:mainresults} and Section~\ref{sec:combining}.    
\end{compactenum}

It is evident from this study that (1) deep GNNs suffer the overcorrelation issue, i.e., the learned  dimensions become more and more correlated as the number of layers increases; and (2) overcorrelation and oversmoothing present different patterns with the increase of the number of layers. In the next subsection, we are going to analyze possible factors causing overcorrelation and discuss the problems of overcorelated features.

\subsection{Analysis on Overcorrelation}
Propagation and transformation are two major components in graph neural networks as discussed in Section~\ref{sec:concepts}. In this subsection, we first show that both propagation and transformation can increase feature correlation. Then we discuss potential problems caused by overcorrelated features.

\subsubsection{Propagation Can Lead to Higher Correlation}\label{sec:prop} In this subsection, we will show that propagation in GNNs can cause higher feature correlation from both theoretical analysis and empirical evidence. In essence, the propagation in GNNs will lead to smoother representation as shown in~\cite{li2018deeper-oversmooth,liu2020towards-oversmooth,oono2019graph}; their analysis suggests that applying infinite propagation can make the node representations in a connected graph to be proportional to each other, which we call extreme oversmoothed features in this work. Next we show the dimensions of extreme oversmoothed features are correlated. 
\begin{proposition}
Given an extreme oversmoothed matrix ${\bf X}$ where each row is proportional to each other, we have $Corr({\bf X})$=1.  
\label{pro:smooth}
\end{proposition}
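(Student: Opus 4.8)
The plan is to exploit the rank-one structure that the proportionality hypothesis forces on $\vX$, and then invoke the scale invariance of the Pearson correlation coefficient. First I would fix an arbitrary nonzero row of $\vX$, call it $\vr = (r_1, \dots, r_d)$, and encode the hypothesis that every row is proportional to it by writing $\vX_{i,:} = c_i \vr$ for scalars $c_1, \dots, c_N$; equivalently $\vX_{i,j} = c_i r_j$. Reading these entries down the columns rather than across the rows gives $\vX_{:,j} = r_j \vc$, where $\vc = (c_1, \dots, c_N)^\top$ is a single vector common to all columns. Thus the central observation is that proportionality of the \emph{rows} is equivalent to every \emph{column} being a scalar multiple of one fixed vector $\vc$.

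With this in hand, the correlation of any two columns reduces to the correlation of two scaled copies of $\vc$. Here I would recall that the Pearson coefficient is invariant under nonzero scaling of each argument up to a sign, i.e. $\rho(a\vx, \vy) = \operatorname{sign}(a)\,\rho(\vx, \vy)$ for any $a \neq 0$, which follows immediately because the numerator scales by $a$ and the denominator by $|a|$. Applying this to $\vX_{:,i} = r_i \vc$ and $\vX_{:,j} = r_j \vc$ yields $|\rho(\vX_{:,i}, \vX_{:,j})| = |\rho(\vc, \vc)| = 1$ for every pair $i \neq j$, where $\rho(\vc,\vc)=1$ is read off directly from the definition since the numerator and denominator coincide when both arguments are the same vector. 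Substituting these $d(d-1)$ values of $1$ into the definition of $Corr(\vX)$ and cancelling the $\tfrac{1}{d(d-1)}$ prefactor then gives $Corr(\vX) = 1$.

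The only real subtlety, and where I would be most careful, is the well-definedness of the correlation coefficient in degenerate situations: $\rho$ requires both columns to have nonzero variance, which fails if $\vc$ is a constant vector (all $c_i$ equal) or if some $r_j = 0$ makes a column identically zero. Under the extreme oversmoothing regime these pathologies are excluded, since the representation is assumed to have genuinely varying, nonzero dimensions; I would state this nondegeneracy explicitly as the standing assumption (equivalently, restrict the $Corr$ average to columns of positive variance) so that each summand is legitimately equal to $1$. Apart from this bookkeeping, the argument is a direct consequence of the rank-one structure and requires no further computation.
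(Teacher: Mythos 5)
Your proof is correct and takes essentially the same route as the paper's: row-proportionality forces every column to be a scalar multiple of a common vector, so each pair of columns has Pearson coefficient $\pm 1$ and hence $Corr(\vX)=1$; your explicit rank-one factorization and the scale-invariance appeal are just a more formal rendering of the paper's direct computation $\rho(\vx, w\vx) = \frac{w}{|w|}\cdot\frac{(\vx-\bar{\vx})^{\top}(\vx-\bar{\vx})}{\|\vx-\bar{\vx}\|\,\|\vx-\bar{\vx}\|}$. Your nondegeneracy caveat (constant or zero-variance columns) likewise matches the paper's footnote excluding constant vectors, for which the Pearson coefficient is undefined.
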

\begin{proof}
Since each row is proportional to each other, each column will also be proportional to each other. We take two  arbitrary dimensions (columns) of ${\bf X}$, and denote them as $[{\bf x}, w{\bf x}]$. The pearson coefficient $\rho({\bf x}, w{\bf x})$ = $\frac{w}{|w|}\cdot\frac{({\bf x-{\bar x}})^{\top}({\bf x-{\bar x}})}{\|{\bf x-{\bar x}}\|\|{\bf x-{\bar x}}\|}$ is either 1 or -1. Since the correlation between any two arbitrary dimensions is 1 or -1, we have $Corr({\bf X})=1$.\footnote{Note that we exclude the case where $\|{\bf x}-{\bf \bar{x}}\|=0$, i.e., ${\bf x}$ is a constant vector, the pearson correlation is undefined. }
\end{proof}

The above proposition indicates that multiple propagation can lead to higher correlation. Though the analysis is only for connected graph, if the training nodes are in the same component, their representations would still be overcorelated and harm the performance of downstream tasks. More importantly, we empirically find that propagation can increase the feature correlation in both connected and disconnected graph. Specifically, we use the full graph (Full) and largest connected component (LCC) of Cora dataset, which originally consists of 78 connected components, to compute the propagation matrix $\hat{\bf{A}}=\tilde{\bf D}^{-1 / 2}{\bf \tilde{A}} \tilde{\bf D}^{-1 / 2}$. Then we apply multiple propagation on randomly generated node features of dimension $100$ whose correlation is very close to 0. We vary the number of propagation ($K$) and illustrate the average values of $Corr(\hat{\bf{A}}^{K}{\bf X})$ for $100$ runs in Figure~\ref{fig:prop}. As we can see from the figure, there is a clear trend that performing multiple propagation can cause uncorrelated features to eventually overcorrelated, no matter whether the graph is connected or not.

Despite that in the extreme case oversmoothing indicates overcorrelation, we show that the opposite of this statement could not hold. Specifically, we have the following proposition:
\begin{proposition}
Given an extreme overcorrelated representation ${\bf X}$, the rows of ${\bf X}$ are not necessarily proportional to each other.
\end{proposition}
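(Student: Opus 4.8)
The plan is to prove the proposition by constructing an explicit counterexample: a matrix $\mathbf{X}$ whose columns are pairwise perfectly correlated (so that $Corr(\mathbf{X})=1$) but whose rows are manifestly \emph{not} proportional to one another. The guiding observation is that the Pearson correlation is invariant under adding a constant to a column, since subtracting the column mean $\bar{x}$ annihilates any additive shift. This is precisely the degree of freedom that decouples column-proportionality from row-proportionality: two columns can be affine images of each other (of the form $\mathbf{y} = w\mathbf{x} + b\mathbf{1}$) and still achieve $|\rho|=1$, yet such columns need not make the rows proportional once the additive offset $b$ is nonzero.

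\textbf{Construction.} First I would take a single base vector $\mathbf{x}\in\mathbb{R}^N$ with $\|\mathbf{x}-\bar{x}\mathbf{1}\|\neq 0$ (i.e.\ $\mathbf{x}$ non-constant, so Pearson is well-defined, matching the footnote caveat of Proposition~\ref{pro:smooth}). Then I would build the columns of $\mathbf{X}$ as affine transforms $\mathbf{X}_{:,j} = a_j \mathbf{x} + b_j \mathbf{1}$ for scalars $a_j\neq 0$ and $b_j$, where $\mathbf{1}$ is the all-ones vector. A small concrete instance suffices, say
\begin{equation}
\mathbf{X} = \begin{bmatrix} 1 & 2 \\ 2 & 3 \\ 3 & 4 \end{bmatrix},
\end{equation}
where the second column equals the first plus $\mathbf{1}$.

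\textbf{Verification.} The two steps to check are: (i) the columns are perfectly correlated, and (ii) the rows are not proportional. For (i), I would note that centering $\mathbf{X}_{:,j} = a_j\mathbf{x} + b_j\mathbf{1}$ gives $a_j(\mathbf{x}-\bar{x}\mathbf{1})$, so any two centered columns are scalar multiples of the common vector $\mathbf{x}-\bar{x}\mathbf{1}$; plugging into the Pearson formula yields $|\rho|=1$ for every pair exactly as in the proof of Proposition~\ref{pro:smooth}, hence $Corr(\mathbf{X})=1$. For (ii), I would exhibit two rows that are not scalar multiples: in the instance above, row $1 = (1,2)$ and row $2 = (2,3)$ satisfy $1/2 \neq 2/3$, so no scalar $w$ makes one a multiple of the other. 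This directly contradicts the row-proportionality that characterizes extreme oversmoothing.

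I do not expect a genuine obstacle here, since the proposition is a negation-of-implication statement and a single counterexample settles it; the only subtlety is conceptual rather than technical, namely isolating \emph{why} the converse of Proposition~\ref{pro:smooth} fails. The crux is that $Corr$ is blind to additive per-column constants (it depends only on the centered columns), whereas row-proportionality is an affine-rigid condition that those constants destroy. Making this point explicit is more valuable than the arithmetic, so I would state the affine-family construction in general and then drop in the $3\times 2$ example as a concrete witness, emphasizing that overcorrelation is strictly weaker than oversmoothing and thus confirming that the two phenomena, though related, are not equivalent.
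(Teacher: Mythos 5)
Your proposal is correct and takes essentially the same approach as the paper: both arguments hinge on the fact that extreme overcorrelation only forces columns to be affine images $w\mathbf{x}+b\mathbf{1}$ of one another, and that the intercept $b$ — which Pearson centering annihilates — is exactly what breaks row-proportionality. The paper phrases this as a general condition (row proportionality would require $bx_1 = bx_2$, which need not hold) and then gives a concrete $2\times 2$ witness in the text, whereas you lead with the explicit counterexample; the substance is identical.
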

\begin{proof}
We take two arbitrary dimensions of ${\bf X}$ and denote them as $[{\bf x}, w{\bf x}+b]$ since they are linearly dependent on each other. We further take two rows from the two dimensions, denoted as $[[x_1, wx_1+b],[x_2, wx_2+b]]$. If the two rows are proportional to each other, they need to satisfy $x_1(wx_2+b)=x_2(wx_1+b)$, which can be written as $bx_1 = bx_2$. When $bx_1 = bx_2$ does not hold, ${\bf X}$ will not be an extreme oversmoothed matrix.
\end{proof}
To demonstrate it more clearly, consider a $2\times2$ matrix with row vectors $v_1=[1, 0]$ and $v_2=[-0.1, 1.1]$. The pearson correlation coefficient between the column vectors is $1$. But the normalized euclidean distance between the row vectors is $0.738$ (the cosine similarity is $-0.09$).

\begin{figure}[t]%
     \centering
     \subfloat[Propagation]{\label{fig:prop}{\includegraphics[width=0.5\linewidth]{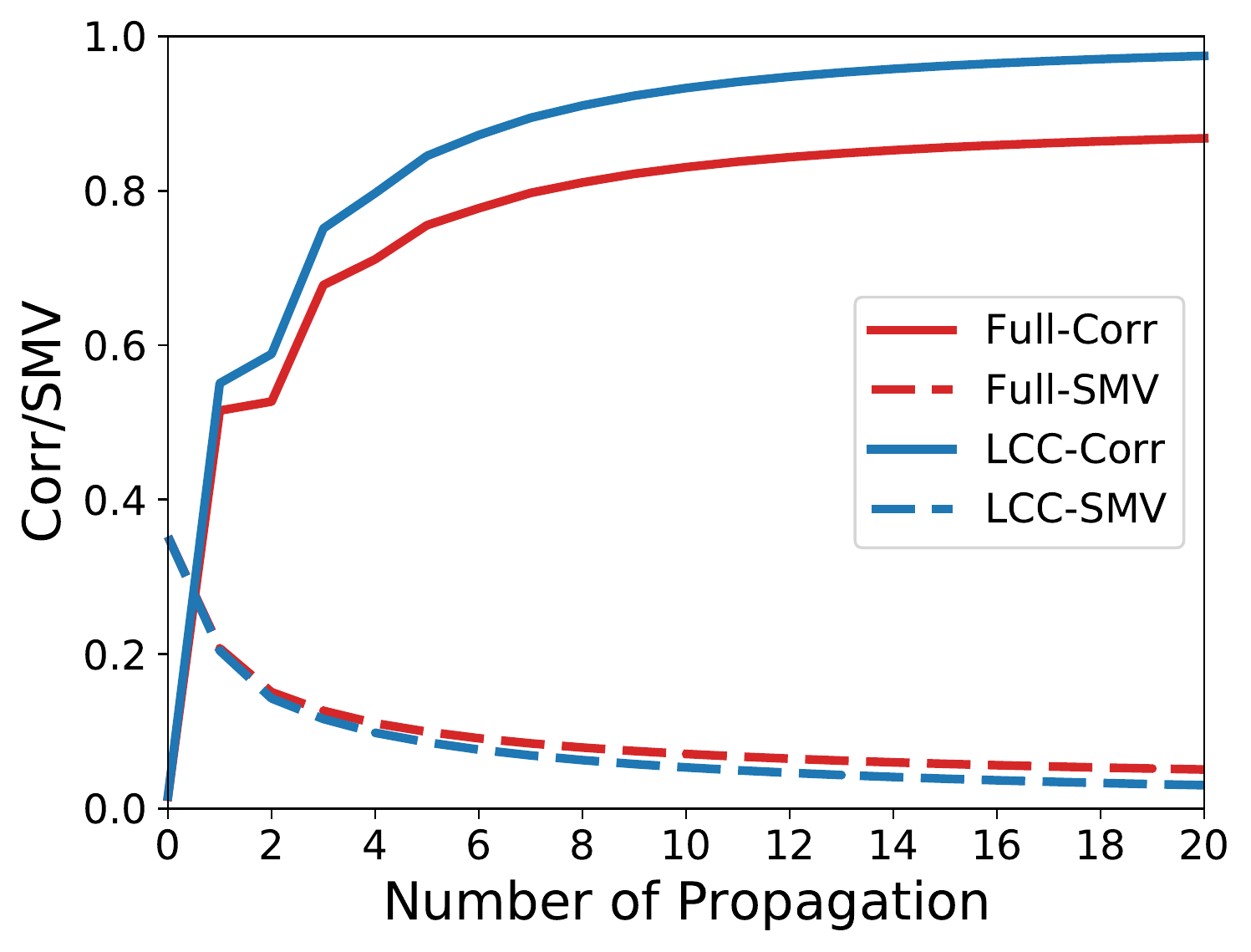} }}%
      \subfloat[Transformation]{\label{fig:trans}{\includegraphics[width=0.5\linewidth]{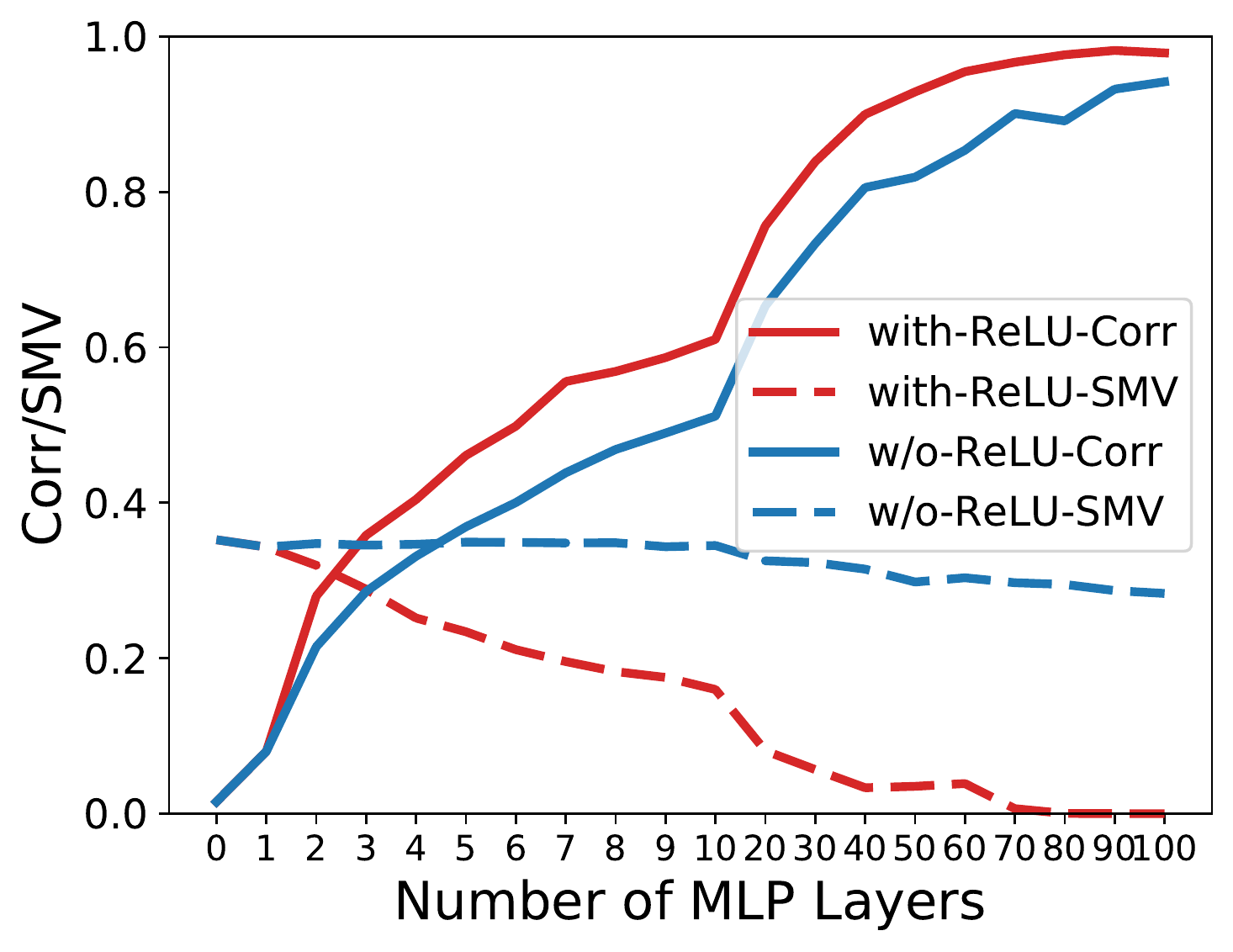} }}%
    \qquad
    \vskip -1em
\caption{The $Corr$ and $SMV$ values on Cora when the number of propagation and transformation increases.}
\vskip -1.5em
\label{fig:prop_trans}%
\end{figure}

\subsubsection{Transformation can Lead to Higher Correlation} In addition to propagation, we also find that transformation can make transformed features more correlated through empirical study. Intuitively, overly stacking transformations will lead to overparameterization, which makes neural networks extract redundant features. \citet{ayinde2019correlation} have empirically identified that the deeper networks have  higher tendency to extract redundant features, i.e., higher correlation among feature dimensions.
Here we also perform empirical study to verify this claim. Specifically, we randomly generate uncorrelated node features of dimension $100$ and apply $K$-layer multi layer perceptron (MLP) with hidden units of $16$. We vary the value of $K$ and plot the $Corr$ values for final representations in Figure~\ref{fig:trans}. Note we do not train the MLP but only focus on the forward pass of the neural network, i.e., the weights of MLP are randomly initialized. 
From the figure we can see that repeatedly applying (linear or non-linear)  transformation will increase the correlation of feature dimensions. Intuitively, this is because transformation linearly combines the feature dimensions and increases the interaction between feature dimensions. As a result, repeated transformation could make each dimension of final representation contain similar amount of information from previous layers and become overcorrelated. On the other hand, backpropagation can to some extent alleviate the overcorrelation issue as the objective downstream task will guide the training process of parameters. However, training very deep neural networks can face other challenges like gradient vanishing/exploding and overfitting, thus failing to effectively reduce the representation correlation. That could also be the reason why trained deep GNNs still exhibit the overcorrelation issue as shown in Figure~\ref{fig:corr_sim}.

\subsubsection{Potential Problems with overcorrelation.} 
The overcorrelation issue indicates that the learned dimensions are redundant and of less efficiency. In this vein, the learned representations may not introduce extra useful information to help train the downstream task and lead to suboptimal performance.  On a separate note, correlated features can bring the overfitting issue and it has been demonstrated that feature decorrelation can lead to better generalization~\cite{cogswell2015reducing,rodriguez2016regularizing}. Hence, it is of great significance to reduce the correlation among feature dimensions.

\subsection{Further Discussions}

In this subsection, we further discuss the difference and relation between overcorrelation and oversmoothing, and revisit existing methods tackling oversmoothing.

\subsubsection{Overcorrelation vs. oversmoothing.} The previous study suggests that overcorrelation and oversmoothing are neither identical nor independent. Their difference can be summarized as: oversmoothing is measured by node-wise smoothness while overcorrelation is measured by dimension-wise correlation. These two measures are essentially different. Correlated feature dimensions do not indicate similar node-wise features. As shown in Figure~\ref{fig:appendix_pubmed_cs}, the $Corr$ value tends to increase while the $SMV$ does not change much on Pubmed and CoauthorCS.  We can conjecture that it is not oversmoothing but overcorrelation that causes the performance degradation on these two datasets. On the other hand, they are also highly related: (1) both overcorrelation and oversmoothing can make the learned representation encode less information and harm the downstream performance; and (2) both of them can be caused by multiple propagation since the extreme case of oversmoothing also suffers overcorrelation as demonstrated in Section~\ref{sec:prop}.

\subsubsection{Revisiting previous methods tackling oversmoothing}\label{sec:revisit} Since we have introduced the overcorrelation issue,  next we revisit the previous methods tackling oversmoothing that have the potential to help alleviate the overcorrelation issue:
\begin{compactenum}[(1)]
\item \textit{DropEdge}~\citep{rong2019dropedge}. DropEdge tackles the oversmoothing problem by randomly dropping edges in the graph. This can be beneficial to correlation reduction: (1) it can weaken the propagation process, thus alleviating overcorrelation; and (2) dropping edges can make the graph more disconnected and further reduce the feature correlation as we showed in Figure~\ref{fig:prop}.

\item \textit{Residual based methods}~\citep{kipf2016semi,chen2020simple,li2019deepgcns}. ResGCN~\citep{kipf2016semi} equips GCN with residual connection and GCNII~\citep{chen2020simple} employs initial residual and identity mapping. Such residual connection bring in additional information from previous layers that can help the final representation encode more useful information and thus alleviate the overcorrelation issue.

\item \textit{Normalization and other methods}~\citep{zhao2020pairnorm,zhou2020towards}. These methods target at pushing GNNs to learn distinct representations. They also can implicitly reduce the feature correlation.
\end{compactenum}

\section{The proposed Framework}
\label{sec:framework}
In this section, we introduce the proposed framework, DeCorr, to tackle the overcorrelation issue. Specifically, the framework consists of two components: (1) explicit feature decorrelation which directly reduces the correlation among feature dimensions; and (2) mutual information maximization which maximizes the mutual information between the input and the representations to enrich the information, thus implicitly making features more independent.

\subsection{Explicit Feature Dimension Decorrelation}
\label{sec:decorr}
In order to decorrelate the learned feature dimensions,  we propose to minimize the correlation among the dimensions of the learned representations. There are many metrics to measure the correlation for variables including linear metrics and non-linear metrics. As discussed in Section~\ref{sec:pre-study}, though measured by linear correlation metric, deep GNNs are shown to have high correlation among dimensions of the learned representations. 
For simplicity, we propose to use the covariance, as a proxy for pearson correlation coefficient, to minimize the correlation among representation dimensions.
Specifically, given a set of feature dimensions  $\{\mathbf{x}_1,\mathbf{x}_2,\ldots,\mathbf{x}_{d}\}$ with $\mathbf{x}_i\in{\mathbb{R}^{N\times{1}}}$, we aim to minimize the following loss function, 
\begin{equation}
\resizebox{0.92\linewidth}{!}{
$\frac{1}{N-1}\left(\sum_{i,j, i \neq j} \left((\mathbf{x}_i - \bar{\mathbf{x}}_i)^{\top}(\mathbf{x}_j - \bar{\mathbf{x}}_j)\right)^2   +  \sum_{i} \left((\mathbf{x}_i - \bar{\mathbf{x}}_i)^{\top}(\mathbf{x}_i - \bar{\mathbf{x}}_i)  - 1 \right)^2\right),$
}
\label{eq:cov}
\end{equation}
where $\bar{\mathbf{x}}_i$ is the vector with all elements as the mean value of ${\mathbf{x}}_i$. In Eq.~\eqref{eq:cov},  minimizing the first term reduces the covariance among different feature dimensions and when the first term is zero, the dimensions will become uncorrelated.  By minimizing the second term, we are pushing the norm of each dimension (after subtracting the mean) to be $1$.  We then rewrite Eq.~\eqref{eq:cov} as the matrix form,
\begin{equation}
\frac{1}{N-1}{\left\|{(\bf{X}-\bf{\bar{X}})^{\top} (\bf{X}-\bf{\bar{X}})}-{\mathbf{I}_{d}}\right\|_{F}^2},
\label{eq:matrix_form}
\end{equation}
where $\bar{\mathbf X}=[\bar{\mathbf{x}}_1, \bar{\mathbf{x}}_2,\ldots, \bar{\mathbf{x}}_d]\in\mathbb{R}^{N\times{d}}$ and $\|\cdot\|_F$ indicates the Frobenius norm. It is worth noting that the gradient of $\left\|{\bf{X}^{\top} \bf{X}}-\mathbf{I}\right\|_{F}^2$ is calculated as $4\bf{X}(\bf{X}^{\top} \bf{X}-\bf{I})$. Thus, the gradient calculation has a complexity of $O(N^{2}d^2)$, which is not scalable when the graph size is extremely large in the real world applications (e.g., millions of nodes). To deal with this issue, instead of using all nodes to calculate the covariance,  we propose to apply Monte Carlo sampling to sample $\sqrt{N}$ nodes with equal probability to estimate the covariance in Eq.~\eqref{eq:cov}. Then the complexity for calculating the gradient reduces to $O(Nd^2)$, which linearly increase with the graph size. In addition,  the sampling strategy injects randomness in the model training process and thus can help achieve better generalization.

Minimizing the loss function in Eq.~\eqref{eq:matrix_form} is analogous to minimizing the following decorrelation loss $\ell_D$ where we normalize the two terms by diving their Frobenius norm in Eq.~\eqref{eq:matrix_form} to make $0\leq\ell_D\leq2$:
\begin{equation}
    \ell_D({\bf X}) = {\left\|\frac{(\bf{X}-\bf{\bar{X}})^{\top} (\bf{X}-\bf{\bar{X}})}{\left\|(\bf{X}-\bf{\bar{X}})^{T} (\bf{X}-\bf{\bar{X}})\right\|_{F}}-\frac{\mathbf{I}_{d}}{\sqrt{d}}\right\|_{F}},
\end{equation}
where $\bf{X}$ can be the output representation matrix for the current GNN layer. Since we hope the representation after each layer can be less correlated, the final decorrelation loss for the explicit representation decorrelation component is formulated as:
\begin{equation}
    \mathcal{L}_D = \sum^{K-1}_{i=1} \ell_D \left({\bf H}^{(i)}\right),
\end{equation}
where ${\bf{H}}^{(i)}$ denotes the hidden representation at the $i$-th layer and $K$ stands for the total number of layers. By minimizing $\mathcal{L}_D$, we explicitly force the representation after each layer to be less correlated, thus mitigating the overcorrelation issue. 

\subsection{Mutual Information Maximization}
\label{sec:mimax}
In Section~\ref{sec:pre-study}, we have demonstrated that the final learned features can be of high redundancy and encode little useful information. To address this issue, in addition to directly constraining on the correlation of features, we propose to further enrich the encoded information by maximizing the mutual information (MI) between the input and the learned features. The motivation comes from independent component analysis (ICA)~\cite{bell1995information-ica}. The ICA principle aims to learn representations with low correlation among dimensions while maximizing the MI between the input and the representation. Since deeper GNNs encode less information in the representations, the MI maximization process can ensure that the learned representations retain a fraction of information from the input even if we stack many layers. Specifically, given two random variables $A$ and $B$, we formulate the MI maximization process as follows:
\begin{equation}
   \max \text{MI}(A, B) = H(A) - H(A|B) = H(B) - H(B|A)
\end{equation}
where $H(\cdot)$ denotes the entropy function and $\text{MI}(A,B)$ measures dependencies between $A$ and $B$. However, maximizing mutual information directly is generally intractable when $A$ or $B$ is obtained through neural networks~\cite{paninski2003estimation}, thus we resort to maximizing a lower bound on $\text{MI}(A, B)$. Specifically, we follow MINE~\cite{belghazi2018mine} to estimate a lower bound of the mutual information by training a classifier to distinguish between sample pairs from the joint distribution $P(A,B)$ and those from $P(A)P(B)$. Formally, this lower bound of mutual information can be described as follows:
\begin{equation}
    \text{MI}(A,B) \geq \mathbb{E}_{P(A,B)} \left[ \mathcal{D}(A,B)\right] -\log \mathbb{E}_{P(A)P(B)}\left[e^{\mathcal{D}(A,B)} \right],
\end{equation}
where $\mathcal{D}(A,B)$ is a binary discriminator. Hence, to maximize the mutual information between $k$-th layer hidden representation ${\bf H}^{(k)}$ and input feature ${\bf X}$, denoted as $\text{MI}({\bf H}^{(k)}, {\bf X})$, we minimize the following objective:
\begin{equation}
    \resizebox{0.88\linewidth}{!}{
    $\ell_M({\bf H}^{(k)},{\bf X}) = - \mathbb{E}_{P({\bf h}^{(k)}_i,{\bf x}_i)} \left[ \mathcal{D}({\bf h}^{(k)}_i,{\bf x}_i)\right] +\log \mathbb{E}_{P({\bf h}^{(k)})P({\bf x})}\left[e^{\mathcal{D}({\bf h}^{(k)}_i,{\bf x}_i)} \right]$
    },
    \label{eq:MI-lower}
\end{equation}
where ${\bf h}^{(k)}_i$ and ${\bf x}_i$ are the hidden representation and input feature for node $v_i$, respectively~\footnote{Here we slightly abuse the notations from previous sections.}; the discriminator $\mathcal{D}(\cdot, \cdot)$ is modeled as:
\begin{equation}
\mathcal{D}({\bf x}_i,{\bf h}^{(k)}_i)=\sigma({\bf x}^{\top}_i{\bf W}{\bf h}^{(k)}_i),
\label{eq:bilinear}
\end{equation}
In practice, in each batch, we sample a set of $\{({\bf h}_i^{(k)}, {\bf x}_i)\}_{i=1}^{B}$ from the joint distribution $P({\bf h}_i^{(k)},{\bf x}_i)$ to estimate the first term in Eq.~\eqref{eq:MI-lower} and then shuffle ${\bf x}_i$ in the batch to generate ``negative pairs'' for estimating the second term.

Although we can apply the above loss function for each layer of deep GNNs as we did in Section~\ref{sec:decorr}, we only apply $\ell_M$ every $t$ layers to accelerate the training process as: 
\begin{equation}
    \mathcal{L}_M = \sum_{i\in[t,2t,3t,...,\frac{K-1}{t} t]} \ell_M \left({\bf H}^{(i)}\right). 
\end{equation}
We empirically observe that a small value of $5$ for $t$ is sufficient to achieve satisfying performance.

\subsection{Objective  and Complexity Analysis}
With the key model components, i.e., explicit representation decorrelation and mutual information maxization, next we first present the overall objective function where we jointly optimize the classification loss along with decorrelation loss and MI maximization loss. Thereafter, we present a complexity analysis and discussion on the proposed framework.

\subsubsection{Overall Objective Function.} The overall objective function can be stated as:
\begin{small}
\begin{align}
\mathcal{L} & =  \mathcal{L}_{class} + \alpha \mathcal{L}_D + \beta \mathcal{L}_M \\
    & = \frac{1}{\left|\mathcal{V}_{L}\right|} \sum_{v_{i} \in \mathcal{V}_{L}} \ell\left(\operatorname{softmax}\left({\mathbf{H}}^{(K)}_{i,:}\right), y_{i}\right) \nonumber \\
     & \quad + \alpha \sum^{K-1}_{i=1} \ell_D \left({\bf H}^{(i)}\right) + \beta \sum_{i\in[t,2t,3t,...,\frac{K-1}{t} t]} \ell_M \left({\bf H}^{(i)}\right)\nonumber
\end{align}
\end{small}where $\mathcal{V}_L$ is the labeled nodes and $y_i$ is the label of node $v_i$; $\alpha$ and $\beta$ are the hyper-parameters that control the contribution of $\mathcal{L}_D$ and $\mathcal{L}_M$, respectively.

\begin{table}[t]
\caption{Node classification accuracy (\%) on different number of layers. (Bold: best)}
\vskip -1em
\scriptsize
\begin{tabular}{@{}c|c|ccc|ccc|ccc@{}}
\toprule
\multirow{2}{*}{\textbf{Dataset}} & \multirow{2}{*}{\textbf{Method}} & \multicolumn{3}{c|}{\textbf{GCN}}              & \multicolumn{3}{c|}{\textbf{GAT}}              & \multicolumn{3}{c}{\textbf{ChebyNet}}         \\ 
                                  &                                  & L2       & L15      & L30      & L2       & L15      & L30      & L2       & L15      & L30     \\ \midrule 
\multirow{6}{*}{Cora}             & None                               & 82.2          & 18.1          & 13.1          & 80.9          & 16.8          & 13.0          & 81.7 & 43.0          & 33.4          \\
                                  & BatchNorm                               & 73.9          & 70.3          & 67.2          & 77.8          & 33.1          & 25.0          & 70.3          & 66.0          & 61.7          \\
                                  & PairNorm                               & 71.0          & 67.2          & 64.3          & 74.4          & 49.6          & 30.2          & 67.6          & 58.2          & 49.7          \\
                                  & DropEdge                         & \textbf{82.8} & 70.5          & 45.4          & {81.5} & 66.3          & 51.0          & {81.5} & 67.1          & 55.7          \\
                                  & DGN                              & 82.0          & 75.2          & {73.2} & 81.1          & 71.8          & 51.3          & 81.5          & \textbf{76.3} & 59.4          \\
                                  & DeCorr                             & 82.2          & \textbf{77.0} & \textbf{73.4}          & \textbf{81.6} & \textbf{76.0} & \textbf{54.3} & \textbf{81.8}          & 73.9          & \textbf{65.4} \\ \midrule
\multirow{6}{*}{Citese.}         & None                               & 70.6          & 15.2          & 9.4           & 70.2          & 22.6          & 7.7           & 67.3          & 38.0          & 28.3          \\
                                  & BatchNorm                               & 51.3          & 46.9          & 47.9          & 61.5          & 28.0          & 21.4          & 51.3          & 38.2          & 37.4          \\
                                  & PairNorm                               & 60.5          & 46.7          & 47.1          & 62.0          & 41.4          & 33.3          & 53.2          & 37.6          & 34.6          \\
                                  & DropEdge                         & 71.7          & 43.3          & 31.6          & 69.8          & 52.6          & 36.1          & 69.8          & 45.8          & 40.7          \\
                                  & DGN                              & 69.5          & 53.1          & 52.6          & 69.3          & 52.6          & 45.6          & 67.3          & 49.3          & 47.0          \\
                                  & DeCorr                             & \textbf{72.1} & \textbf{67.7} & \textbf{67.3} & \textbf{70.6} & \textbf{63.2} & \textbf{46.9} & \textbf{72.6} & \textbf{56.0} & \textbf{53.2} \\ \midrule
\multirow{6}{*}{Pubmed}           & None                               & 79.3          & 22.5          & 18.0          & 77.8          & 37.5          & 18.0          & 78.4          & 49.5          & 43.5          \\
                                  & BatchNorm                               & 74.9          & 73.7          & 70.4          & 76.2          & 56.2          & 46.6          & 73.6          & 68.0          & 69.1          \\
                                  & PairNorm                               & 71.1          & 70.6          & 70.4          & 72.4          & 68.8          & 58.2          & 73.4          & 67.6          & 62.3          \\
                                  & DropEdge                         & 78.8          & 74.0          & 62.1          & 77.4          & 72.3          & 64.7          & 78.7          & 73.3          & 68.4          \\ 
                                  & DGN                              & 79.5          & 76.1          & 76.9          & 77.5          & 75.9          & 73.3          & 78.6          & 71.0          & 70.5          \\
                                  & DeCorr                             & \textbf{79.6} & \textbf{78.1} & \textbf{77.3} & \textbf{78.1} & \textbf{77.5} & \textbf{74.1} & \textbf{78.7} & \textbf{77.0} & \textbf{72.9} \\ \midrule
\multirow{6}{*}{Coauth.}       & None                               & 92.3          & 72.2          & 3.3           & 91.5          & 6.0           & 3.3           & 92.9          & 71.7          & 35.2          \\
                                  & BatchNorm                               & 86.0          & 78.5          & \textbf{84.7}          & 89.4          & 77.7          & 16.7          & 84.1          & 77.2          & 80.7          \\
                                  & PairNorm                               & 77.8          & 69.5          & 64.5          & 85.9          & 53.1          & 48.1          & 79.1          & 51.5          & 57.9          \\
                                  & DropEdge                         & 92.2          & 76.7          & 31.9          & 91.2          & 75.0          & 52.1          & 92.9          & 76.5          & 68.1          \\
                                  & DGN                              & 92.3          & 83.7          & 84.4          & \textbf{91.8} & \textbf{84.5} & 75.5          & 92.7          & 84.0          & 80.4          \\
                                  & DeCorr                             & \textbf{92.4} & \textbf{86.4} & {84.5} & 91.3          & 83.5          & \textbf{77.3} & \textbf{93.0} & \textbf{86.1} & \textbf{81.3} \\ 
\bottomrule
\end{tabular}
\vskip -2em
\label{tab:testacc}
\end{table}
\subsubsection{Complexity Analysis}
We compare the proposed method with vanilla GNNs by analyzing the additional complexity in terms of model parameters and time. For simplicity, we assume that all hidden dimension is $d$ and the input dimension is $d_0$. 

\vskip 0.5em
\noindent\textbf{Model Complexity.} In comparison to vanilla GNNs, the only additional parameters we introduce are the weight matrix ${\bf W}$ in Eq.~\eqref{eq:bilinear} when scoring the agreement for positive/negative samples. Its complexity is $O(d_{0}d)$, which does not depend on the graph size. Since the hidden dimension is usually much smaller than the number of nodes in the graph, the additional model complexity is negligible. 

\vskip 0.5em
\noindent\textbf{Time Complexity.} As shown in Section~\ref{sec:decorr} and \ref{sec:mimax}, the additional computational cost comes from the calculation and backpropagation of the $\mathcal{L}_D$ and $\mathcal{L}_M$ losses. Since we perform Monte Carlo sampling to sample $\sqrt{N}$ nodes, the complexity of calculating $\mathcal{L}_D$ becomes $O(K\sqrt{N}d^2)$ and the complexity of backpropagation for it becomes $O(KNd^2)$; the complexity of calculating $\mathcal{L}_M$ and its gradient is $O(KNd_{0}d)$.  Considering $d$ and $K$ are usually much smaller than the number of nodes $N$, the total additional time complexity becomes $O(KNd^2+KNd_{0}d)$, which increases linearly with the number of nodes.

\section{Experiment}



In this section, we evaluate the effectiveness of the proposed framework under various settings and aim to answer the following research questions: 
\textbf{RQ1.} Can DeCorr help train deeper GNNs? \textbf{RQ2.} By enabling deeper GNNs, is DeCorr able to help GNNs achieve better performance? 
\textbf{RQ3.} Can DeCorr be equipped with methods that tackle oversmoothing and serve as a complementary technique?
\textbf{RQ4.} How do different components affect the performance of the proposed DeCorr?

\subsection{Experimental Settings}
To validate the proposed framework, we conduct experiments on 9 benchmark datasets, including Cora, Citeseer, Pubmed~\cite{sen2008collective}, CoauthorCS~\cite{shchur2018pitfalls}, Chameleon, Texas, Cornell, Wisconsin and Actor~\cite{pei2020geom}. Following~\cite{zhao2020pairnorm,zhou2020towards}, we also create graphs by removing features in validation and test sets for Cora, Citeseer, Pubmed and CoauthorCS. The statistics of these datasets and data splits can be found in Appendix~\ref{sec:dataset}.
We consider three basic GNN models, GCN~\cite{kipf2016semi}, GAT~\cite{gat} and ChebyNet~\cite{ChebNet}, and equip them with the following methods tackling the oversmoothing issue: PairNorm~\cite{zhao2020pairnorm}, BatchNorm~\cite{ioffe2015batch}, DGN~\cite{zhou2020towards}, and DropEdge~\cite{rong2019dropedge}.  We implemented our proposed framework based on the code provided by DGN~\cite{zhou2020towards}, which uses  Pytorch Geometric~\cite{fey2019fast-pyg}. Without specific mention,
we follow previous settings in~\cite{zhao2020pairnorm,zhou2020towards}: train with a maximum of 1000 epochs using the Adam optimizer~\cite{kingmaadam}, run each experiment 5 times and report the average. Detailed parameter settings can be found in Appendix~\ref{sec:param_setting}.

\subsection{Performance of Deeper GNNs}
\label{sec:mainresults}
In this subsection, we integrate GCN, GAT and ChebyNet into our proposed framework and compare the performance with previous methods that tackle oversmoothing under the settings of normal graphs and graphs with missing features. 

\subsubsection{Alleviating the performance drop in deeper GNNs.}\label{sec:521} We aim to study the performance of deeper GNNs when equipped with DeCorr and answer the first research question. Following the previous settings in~\cite{zhou2020towards}, we perform experiments on Cora, Citeseer, Pubmed and CoauthorCS datasets. Note that ``None''  indicates vanilla GNNs without equipping any methods. We report the performance of GNNs with 2/15/30 layers in Table~\ref{tab:testacc} due to space limit while similar patterns are observed in other numbers of layers. As we can see from the table, the proposed DeCorr can greatly improve deeper GNNs. Furthermore, given the same layers, DeCorr consistently achieves the best performance for most cases and significantly slows down the performance drop. For example, on Cora dataset, DeCorr improves 15-layer GCN and 30-layer GCN by a margin of 58.9\% and 60.3\%, respectively. Given the improvement of the performance, the node representations become much more distinguishable than that in vanilla GNNs, thus alleviating the oversmoothing issue just as what the baseline methods do. The above observations indicate that dealing with feature overcorrelation can allow deeper GNNs and even achieves better performance than these only focusing on tackling oversmoothing.

It is worth noting that in most cases the propose framework can also boost the performance of 2-layer GNNs while the strongest baseline, DGN, fail to achieve that and sometimes deteriorate the performance. For instance, on Citeseer dataset, DeCorr improves GCN, GAT and ChebyNet by a margin of 1.5\%, 0.4\% and 5.3\%, respectively. This suggests that decorrelating the learned features is generally helpful for improving the generalization of various models instead of only deeper models.

\subsubsection{Enabling deeper and better GNNs under the missing feature setting}\label{sec:522} In Section~\ref{sec:521}, we have demonstrated the superiority of reducing feature overcorrelation in helping train deeper GNNs. However, the performance achieved by deeper GNNs are not as good as shallow ones (2-layer GNNs) as shown in Table~\ref{tab:testacc}. Then a natural question is: when deeper GNNs are beneficial? To answer this question, we remove the node features in validation and test set following the idea in~\cite{zhao2020pairnorm,zhou2020towards}.  This scenario often happens in real world. For instance, new users in social networks are often lack of profile information while connecting with a few other users~\cite{rashid2008learning}. To learn effective representations for those new users, we would need a larger number of propagation steps to propagate the attribute information from existing users. Hence, under this circumstance, deeper GNNs can behave much better than shallow ones. We vary the number of layers $K$ in $\{1,2,\ldots,10, 15,\ldots,30\}$ and report the performance in Table~\ref{tab:miss}. Note that ``None'' indicates vanilla GNNs without equipping any methods and $\#K$ indicates the number of layers when the model achieves the best performance. Since we empirically find that BatchNorm is necessary in ChebyNet under this setting, we also include the variant of our model which equips with BatchNorm (DGN is also based on BatchNorm). From the table, we make the following two observations: 
\begin{compactenum}[(1)]
    \item Under the missing feature setting, the best performance is always achieved by the deeper models, i.e., the values of $\#K$ are always relatively larger.  This suggests that more propagation steps is necessary to learn the good representation for nodes with missing features.
    \item DeCorr achieves the best performance in 8 out of the 12 cases and significantly outperforms shallow GNNs. For example, on Pubmed dataset, DeCorr achieves improvements of 36.9\%, 33.9\% and 19.5\% on GCN, GAT and ChebyNet, respectively. It further demonstrates the importance of alleviating overcorrelation in developing deeper GNNs.  
\end{compactenum}

\begin{table}[h]
\caption{Test accuracy (\%) on missing feature setting.}
\small
\vskip -1em
\begin{threeparttable}
\begin{tabular}{@{}c|c|cc|cc|cc|cc@{}}
\toprule
\multirow{2}{*}{Model} & \multirow{2}{*}{Method} & \multicolumn{2}{c|}{Cora} & \multicolumn{2}{c|}{Citeseer} & \multicolumn{2}{c|}{Pubmed} & \multicolumn{2}{c}{Coauth.} \\
                       &                         & Acc         &$\#$K        & Acc           &$\#$K          & Acc          &$\#$K         & Acc          &$\#$K          \\ \midrule

\multirow{6}{*}{GCN}   & None                  & 57.3             & 3     & 44.0               & 6       & 36.4              & 4      & 67.3                & 3        \\
                       & BatchNorm                    & 71.8             & 20    & 45.1               & 25      & 70.4              & 30     & 82.7                & 30       \\
                       & PairNorm                    & 65.6             & 20    & 43.6               & 25      & 63.1              & 30     & 63.5                & 4        \\
                       & DropEdge              & 67.0             & 6     & 44.2               & 8       & 69.3              & 6      & 68.6                & 4        \\
                       & DGN                   & \textbf{76.3}    & 20    & \textbf{50.2}      & 30      & 72.0              & 30     & 83.7                & 25       \\
                       & DeCorr                  & 73.8             & 20    & 49.1               & 30      & \textbf{73.3}     & 15     & \textbf{84.3}       & 20       \\ \midrule
\multirow{6}{*}{GAT}   & None                    & 50.1             & 2     & 40.8               & 4       & 38.5              & 4      & 63.7                & 3        \\
                       & BatchNorm                    & 72.7             & 5     & 48.7               & 5       & 60.7              & 4      & 80.5                & 6        \\
                       & PairNorm                    & 68.8             & 8     & 50.3               & 6       & 63.2              & 20     & 66.6                & 3        \\
                       & DropEdge              & 67.2             & 6     & 48.2               & 6       & 67.2              & 6      & 75.1                & 4        \\
                       & DGN                   & \textbf{75.8}    & 8     & \textbf{54.5}               & 5       & 72.3              & 20     & 83.6                & 15       \\
                       & DeCorr                  & 72.8             & 15    &    46.5                &   6      & \textbf{72.4}     & 15     & \textbf{83.7}       & 15       \\ \midrule
\multirow{7}{*}{Cheby.} & None                    & 50.3             & 8     & 31.7               & 4       & 43.7              & 6      & 34.4                & 10       \\
                       & BatchNorm                    & 61.3             & 30    & 35.5               & 6       & 61.6              & 30     & 74.8                & 30       \\
                       & PairNorm                    & 53.7             & 15    & 35.8               & 30      & 53.7              & 25     & 41.5                & 20       \\
                       & DropEdge              & 60.6             & 8     & 35.1               & 4       & 49.3              & 15     & 38.2                & 8        \\
                       & DGN                   & 61.0             & 30    & 35.0               & 30      & 56.3              & 25     & 75.1                & 30       \\
                       & DeCorr                  & 56.0             & 8     & \textbf{35.9}      & 6       & 49.1              & 30     & 48.3                & 10       \\
                       & DeCorr+BN*               & \textbf{62.5}    & 30    & 35.4               & 6       & \textbf{63.2}     & 30     & \textbf{76.5}       & 30       \\ \bottomrule
\end{tabular}
\begin{tablenotes}
   \item[*] DeCorr+BN is the variant when BatchNorm (BN) is equipped with ours. 
\end{tablenotes}
\end{threeparttable}
\label{tab:miss}
\vskip -1em
\end{table}

\subsubsection{Correlation and Smoothness over Training Epochs}
As shown in Table~1, DeCorr achieves significant improvement over other baselines in GAT on Citeseer dataset. Hence, we further investigate the reason behind it. Concretely, we plot the $Corr$, $SMV$, train/val/test accuracy for DGN and DeCorr when they are equipped to 15-layer GAT. The results are shown in Figure~\ref{fig:appendix_L15}. From the figure, we can see that although  DGN can maintain a high $SMV$ value (around 0.6), their performance is still not very satisfying. Moreover, their $Corr$ values are much higher than DeCorr: around 0.6 in  DGN, and around 0.35 in DeCorr. Based on this observation, we believe that overcorrelation is an important issue when enabling deeper GNN that researchers should pay attention to.
\begin{figure}[h]%
     \centering
     \subfloat[DGN]{{\includegraphics[width=0.5\linewidth]{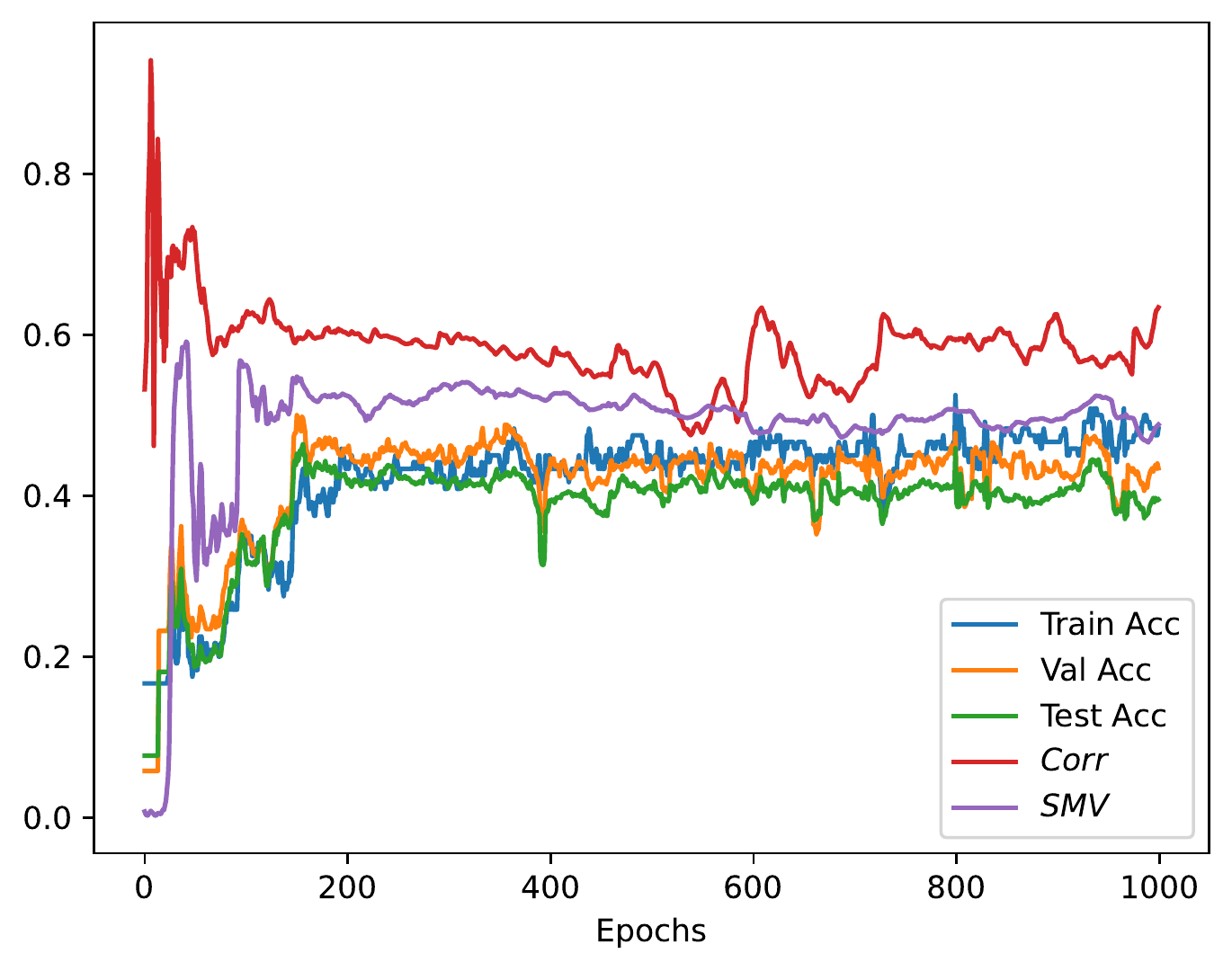}}}%
     \subfloat[DeCorr]{{\includegraphics[width=0.5\linewidth]{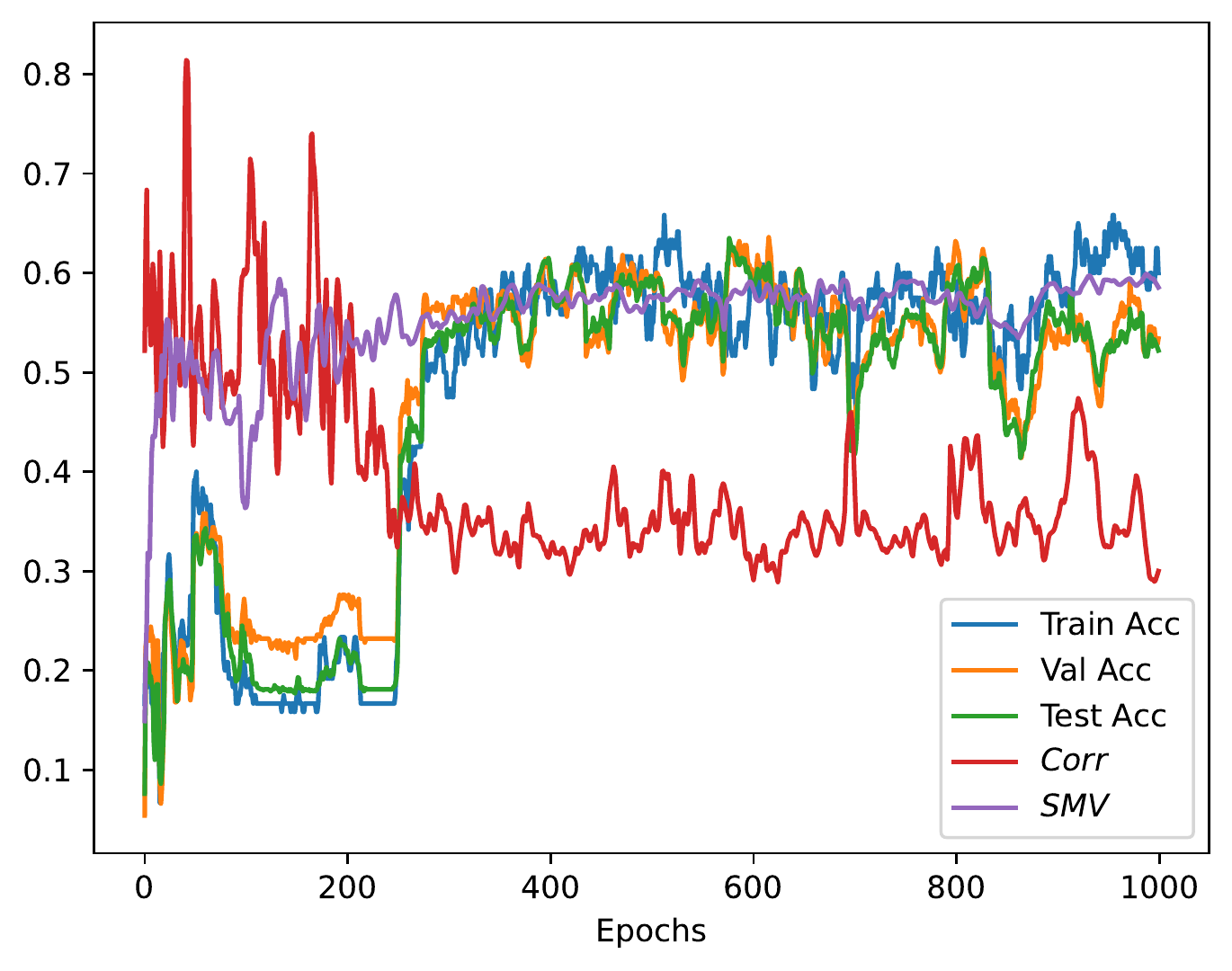}}}%
     \quad
\vskip -1em
\caption{$Corr$, $SMV$ and accuracy over training epochs.}
\vskip -1em
\label{fig:appendix_L15}
\end{figure}

\subsection{Combining with Other Deep GNN Methods}
\label{sec:combining}
In this subsection, we answer the third question and integrate DeCorr into other deep GNN methods. 

\subsubsection{Combining with DGN} 
\label{sec:dgn_ours}
To verify if DeCorr can serve as a complementary technique for methods tackling oversmoothing, we choose the strongest baseline, DGN, to be combined with DeCorr. Specifically, we vary the values of $K$ in $\{2,4,\ldots,20, 25,\ldots,$ $40\}$ and report the test accuracy and $Corr$ values of DGN+DeCorr, DGN and DeCorr on the Cora dataset in Figure~\ref{fig:dgn_ours}. From the figure, we make the following observations:
\begin{compactenum}[(1)]
    \item When combining DGN with DeCorr, we can achieve even better performance than each individual method, which indicates that overcorrelation and oversmoothing are not identical. It provides new insights for developing deeper GNNs as we can combine the strategies tackling overcorrleation with ones solving oversmoothing to enable stronger deeper models.
    \item In Figure~\ref{fig:dgn_ours_corr}, we can observe that DGN is not as effective as DeCorr in reducing feature correlation $Corr$. However, combining DGN with DeCorr can achieve even lower $Corr$ than DeCorr with the larger number of layers. It could be the reason that the training process of DeCorr becomes more stable when combined with DGN, which leads to a better minimization on feature correlation.
\end{compactenum}

\begin{table}[t]
\scriptsize
\caption{Node classification accuracy (\%) for eight datasets.}
\vskip -1em
\begin{tabular}{@{}ccccccccc@{}}
\toprule
 & \textbf{Cora} & \textbf{Cite.} & \textbf{Pubm.} & \textbf{Corn.} & \textbf{Texas} & \textbf{Wisc.} & \textbf{Cham.} & \textbf{Actor}\\ \midrule
 GCN   & 81.5 & 71.1 & 79.0 & 28.2 & 52.7 & 52.2 & 45.9 & 26.9\\
GAT   & 83.1 & 70.8 & 78.5 & 42.9 & 54.3 & 58.4 & 49.4 & 28.5\\
APPNP & 83.3 & 71.8 & 80.1 & 54.3 & 73.5 & 65.4 & 54.3 & 34.5\\ \midrule
GCNII                    & 85.5          & 73.5              & 79.9            & 70.8             & 75.7           & 76.7                & 52.8    & 34.5          \\
GCNII+DeCorr               & \textbf{85.6} & \textbf{73.8}     & \textbf{80.3}   & \textbf{75.4}    & \textbf{76.5}  & \textbf{80.8}       & \textbf{54.1}  & \textbf{35.3}    \\ \midrule
GCNII*                   & \textbf{85.4} & 73.1              & 80.0            & 73.8             & 78.4           & 79.2                & 54.3           & 35.1    \\
GCNII*+DeCorr              & 85.3          & \textbf{73.7}     & \textbf{80.4}   & \textbf{79.2}    & \textbf{80.3}  & \textbf{82.5}       & \textbf{59.0}   & \textbf{35.3}   \\ \bottomrule
\end{tabular}
\vskip -2em
\label{tab:deepmodels}
\end{table}

\subsubsection{Combing with DeepGCN}
We conducted experiments on the layers of DeepGCN~\cite{li2019deepgcns}. As the DeepGCN layer is composed of residual connection (with layer norm) and dilated convolution, we report their performances in the following table separately. Specifically, we report the results of DeepGCN, DeepGCN without dilated convolution (denoted as Res), DeepGCN without residual connection (denoted as DConv) by stacking 15/30 layers. We also report the results of equipping them with DeCorr. The results are summarized in Table~\ref{tab:residual}. From the table, we make three observations: (1) Residual connection can effectively boost the performance and help alleviate overcorrelation issue. This is consistent with our discussion in Section 3.3. (2) Dilated convolution does not help alleviate the issue. It seems that residual connections should be the major reason why DeepGCN can be very deep. (3) Residual connection with DeCorr can achieve even higher performances, e.g., 2.6\%/1.7\% gain for 15/30 layers.

\begin{table}[h]
\small
\caption{DeCorr with DConv/Residual on Cora.}
\label{tab:residual}
\vskip -1em
\begin{tabular}{@{}l|ccc|ccc@{}}
\toprule
               & \multicolumn{3}{c|}{Layer 15} & \multicolumn{3}{c}{Layer 30} \\ \midrule
Methods        & Acc      & Corr    & SMV     & Acc      & Corr    & SMV     \\ \midrule
GCN            & 21.92    & 0.84    & 0.02    & 13.20     & 0.89    & 0.01    \\ 
DConv          & 21.92    & 0.84    & 0.02    & 13.20     & 0.89    & 0.01    \\
Res            & 76.80     & 0.34    & 0.56    & 75.58    & 0.40     & 0.53    \\
DeCorr         & 76.94    & 0.15    & 0.58    & 72.60     & 0.20     & 0.56    \\
DeepGCN        & 76.82    & 0.33    & 0.58    & 75.46    & 0.33    & 0.58    \\
DConv+DeCorr   & 74.62    & 0.27    & 0.56    & 68.06    & 0.36    & 0.50     \\
Res+DeCorr     & 79.40     & 0.25    & 0.59    & 77.30     & 0.22    & 0.60     \\
DeepGCN+DeCorr & 76.48    & 0.25    & 0.62    & 76.56    & 0.23    & 0.63    \\ \bottomrule
\end{tabular}
\vskip -1em
\end{table}

\subsubsection{Combining with other deep models.} In addition to those general frameworks which alleviate the oversmoothing issue, we further combine deep models GCNII and GCNII*~\cite{chen2020simple} (the depth is in $\{16, 32, 64\}$) with DeCorr.  We perform experiments on eight benchmark datasets and report the average accuracy for 10 random seeds in Table~\ref{tab:deepmodels}. As shown in the table, we find that DeCorr can further improve both GCNII and GCNII* in most datasets. For example, Decorr improves GCNII* by 0.6\%, 0.4\%, 5.4\%, 1.9\%, 3.3\% and 4.7\% on Citeseer, Pubmed, Cornell, Texas, Wisconsin and Chameleon, respectively. Such observation further supports that decorrelating features can help boost the model performance.

\begin{figure}[t]%
     \centering
     \subfloat[Test Accuracy]{\label{fig:dgn_ours_acc}{\includegraphics[width=0.5\linewidth]{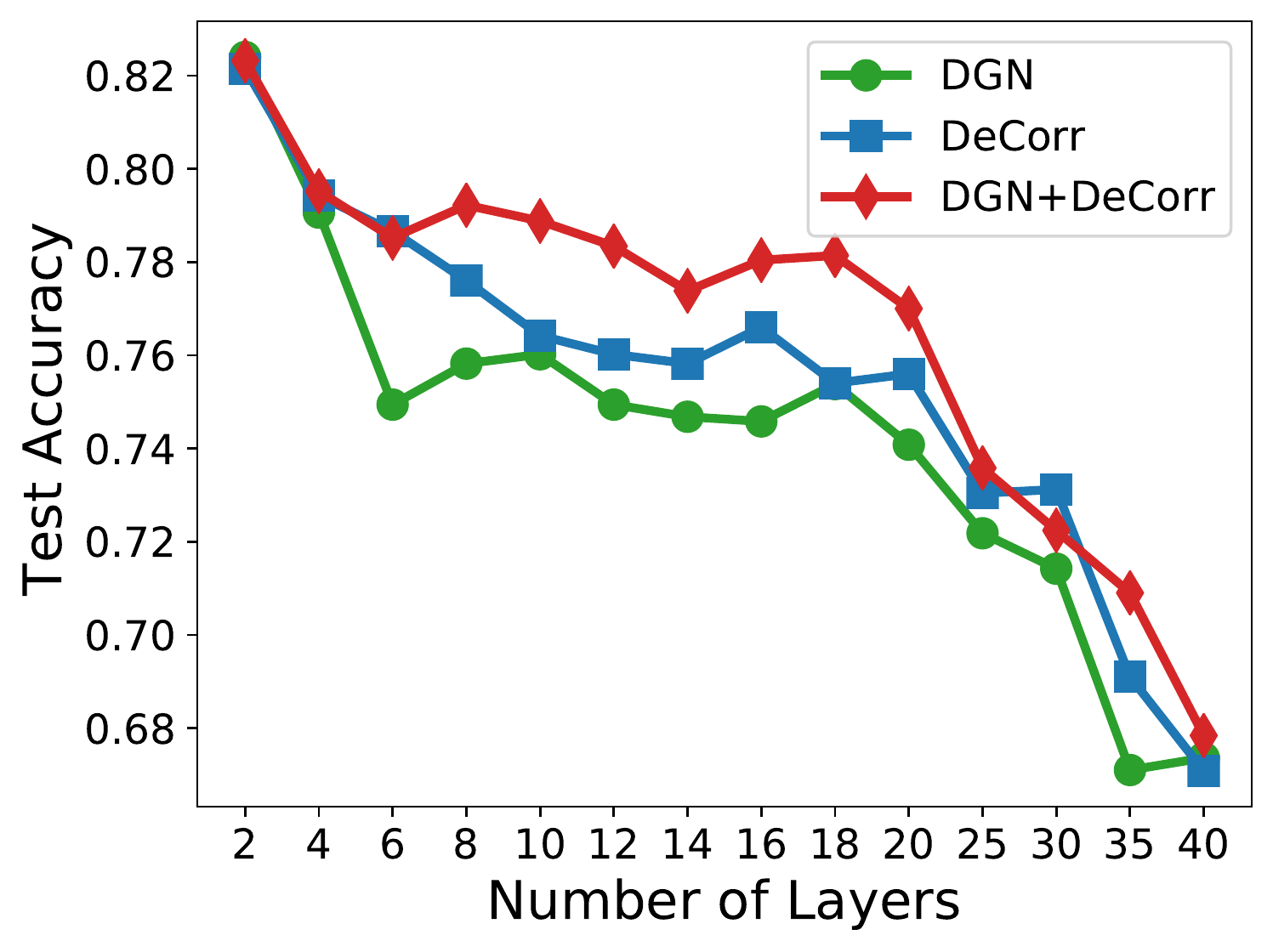} }}%
      \subfloat[Correlation]{\label{fig:dgn_ours_corr}{\includegraphics[width=0.5\linewidth]{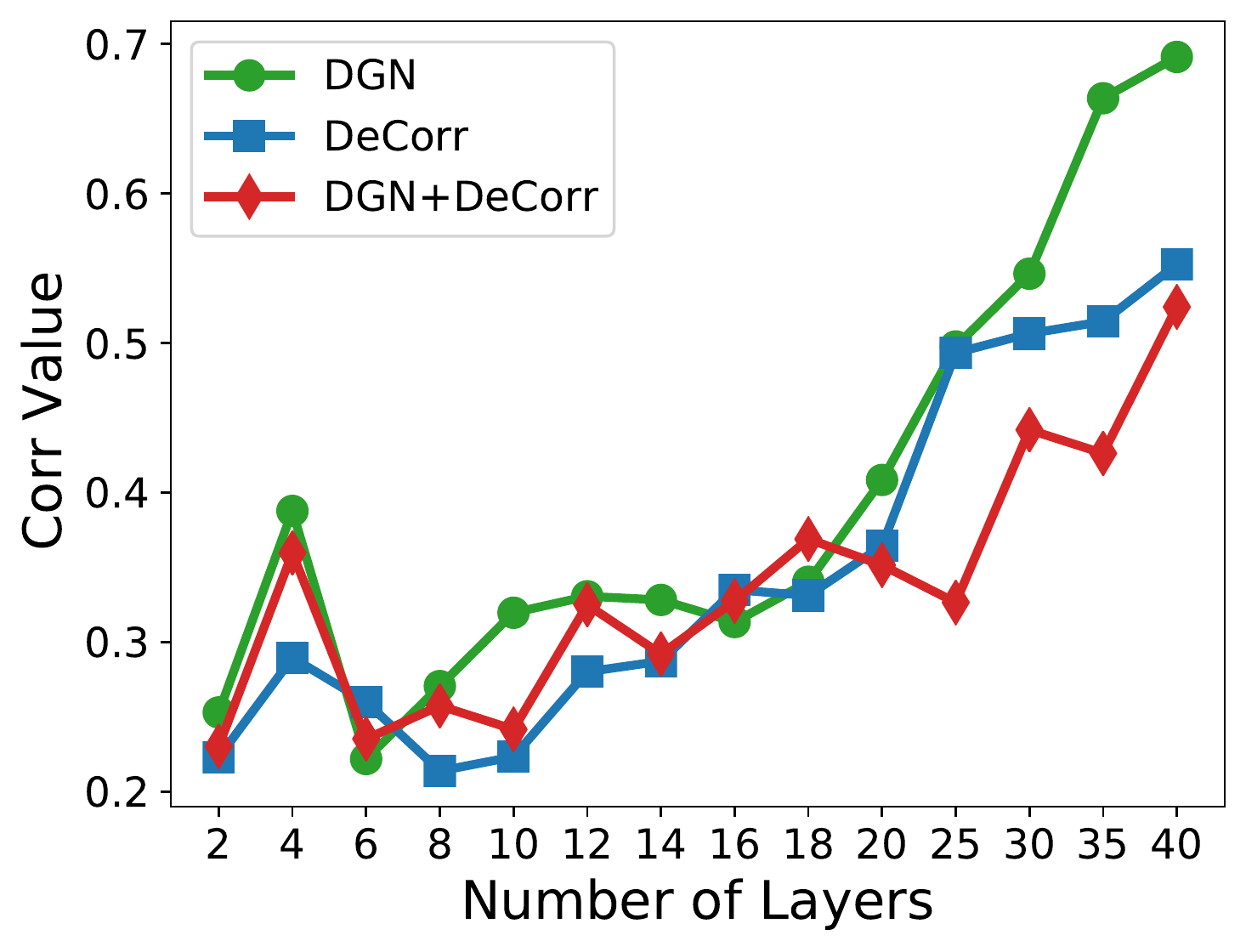} }}%
    \qquad
    \vskip -1em
\caption{Test accuracy and $Corr$ values on the Cora dataset.}
\vskip -1em
\label{fig:dgn_ours}%
\end{figure}

\begin{figure}[t]%
     \centering
     \subfloat[Cora]{{\includegraphics[width=0.5\linewidth]{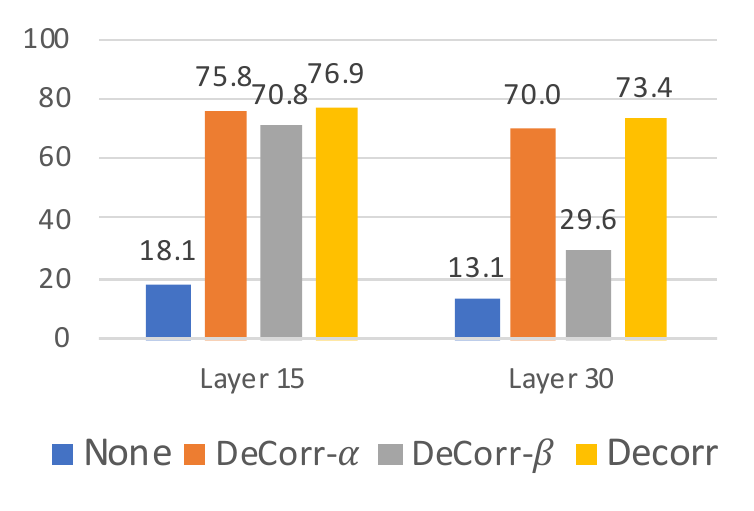} }}%
      \subfloat[Citeseer]{{\includegraphics[width=0.5\linewidth]{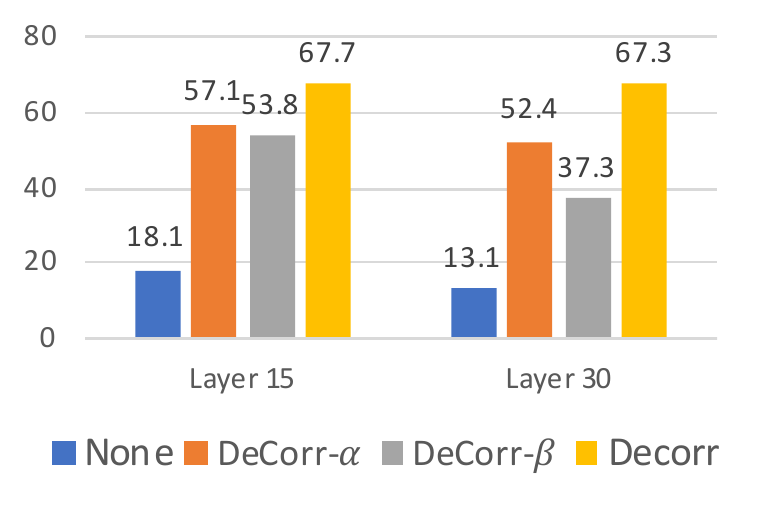} }}%
      \qquad
          \vskip -0.5em
\vskip -1em
\caption{Ablation study for Cora and Citeseer.}
\vskip -1em
\label{fig:ablation}%
\end{figure}


\subsection{Ablation Study}
In this subsection, we take a deeper look at the proposed framework to understand how each component affects its performance and answer the fourth question. Due to space limit, we focus on studying GCN while similar observations can be made for GAT and ChebyNet. We choose 15/30-layer GCN to perform the ablation study on Cora and Citeseer (check Appendix A.2 for Pubmed and CoauthorCS). Specifically, we create the following ablations:
\begin{itemize}
\item \textbf{None:} vanilla GCN without any other components.
\item \textbf{DeCorr-$\alpha$:} we remove the $\mathcal{L}_M$ loss but keep $\mathcal{L}_D$.
\item \textbf{DeCorr-$\beta$:} we remove the $\mathcal{L}_D$ loss but keep $\mathcal{L}_M$.
\end{itemize}
The results are shown in Figure~\ref{fig:ablation}. As shown in the figure, both DeCorr-$\alpha$ and DeCorr-$\beta$ can greatly improve the performance of 15-layer GCN, indicating that optimizing $\mathcal{L}_D$ and $\mathcal{L}_M$ can both help reduce feature correlation and boost the performance. Note that DeCorr-$\alpha $ and DeCorr-$\beta$ can achieve comparable performance on 15-layer GCN.  However, on 30-layer GCN,  DeCorr-$\beta$ does not bring as much improvement as DeCorr-$\alpha$ does on the four datasets. This observation suggests that when GNNs are very deep, explicit feature decorrelation ($\mathcal{L}_D$) is of more significance than the implicit method ($\mathcal{L}_M$). 

\section{Conclusion}
Graph neural networks suffer severe performance deterioration when deeply stacking layers. Recent studies have shown that the oversmoothing issue is the major cause of this phenomenon. In this paper, we introduce a new perspective in deeper GNNs, i.e., feature overcorrelation, and perform both theoretical and empirical studies to deepen our understanding on this issue. We find that overcorrelation and oversmothing present different patterns while they are also related to each other. To address the overcorrelation issue, we propose a general framework, DeCorr, which aims to directly reduce the correlation among feature dimensions while maximizing the mutual in-formation between input and the representations. Extensive experiments have demonstrated  that the proposed DeCorr can help deeper GNNs encode more useful information and achieve better performance under the settings of normal graphs and graphs with missing features. As one future work, we plan to explore the potential of applying DeCorr on various real-world applications such as recommender systems.

\section*{ACKNOWLEDGEMENT}
This research is supported by  the National Science Foundation (NSF) under grant numbers IIS1714741, CNS1815636, IIS1845081, IIS1907704, IIS1928278, IIS1955285, IOS2107215, and IOS2035472, and the Army Research Office (ARO) under grant number W911NF-21-1-0198.

\bibliographystyle{ACM-Reference-Format}
\bibliography{sample}


\begin{thebibliography}{49}


\ifx \showCODEN    \undefined \def \showCODEN     #1{\unskip}     \fi
\ifx \showDOI      \undefined \def \showDOI       #1{#1}\fi
\ifx \showISBNx    \undefined \def \showISBNx     #1{\unskip}     \fi
\ifx \showISBNxiii \undefined \def \showISBNxiii  #1{\unskip}     \fi
\ifx \showISSN     \undefined \def \showISSN      #1{\unskip}     \fi
\ifx \showLCCN     \undefined \def \showLCCN      #1{\unskip}     \fi
\ifx \shownote     \undefined \def \shownote      #1{#1}          \fi
\ifx \showarticletitle \undefined \def \showarticletitle #1{#1}   \fi
\ifx \showURL      \undefined \def \showURL       {\relax}        \fi
\providecommand\bibfield[2]{#2}
\providecommand\bibinfo[2]{#2}
\providecommand\natexlab[1]{#1}
\providecommand\showeprint[2][]{arXiv:#2}

\bibitem[\protect\citeauthoryear{Ayinde, Inanc, and Zurada}{Ayinde
  et~al\mbox{.}}{2019}]%
        {ayinde2019correlation}
\bibfield{author}{\bibinfo{person}{Babajide~O Ayinde}, \bibinfo{person}{Tamer
  Inanc}, {and} \bibinfo{person}{Jacek~M Zurada}.}
  \bibinfo{year}{2019}\natexlab{}.
\newblock \showarticletitle{On correlation of features extracted by deep neural
  networks}. In \bibinfo{booktitle}{\emph{IJCNN}}.
\newblock


\bibitem[\protect\citeauthoryear{Belghazi, Baratin, Rajeswar, Ozair, Bengio,
  Courville, and Hjelm}{Belghazi et~al\mbox{.}}{2018}]%
        {belghazi2018mine}
\bibfield{author}{\bibinfo{person}{Mohamed~Ishmael Belghazi},
  \bibinfo{person}{Aristide Baratin}, \bibinfo{person}{Sai Rajeswar},
  \bibinfo{person}{Sherjil Ozair}, \bibinfo{person}{Yoshua Bengio},
  \bibinfo{person}{Aaron Courville}, {and} \bibinfo{person}{R~Devon Hjelm}.}
  \bibinfo{year}{2018}\natexlab{}.
\newblock \showarticletitle{Mine: mutual information neural estimation}.
\newblock \bibinfo{journal}{\emph{ArXiv preprint}} (\bibinfo{year}{2018}).
\newblock


\bibitem[\protect\citeauthoryear{Bell and Sejnowski}{Bell and
  Sejnowski}{1995}]%
        {bell1995information-ica}
\bibfield{author}{\bibinfo{person}{Anthony~J Bell} {and}
  \bibinfo{person}{Terrence~J Sejnowski}.} \bibinfo{year}{1995}\natexlab{}.
\newblock \showarticletitle{An information-maximization approach to blind
  separation and blind deconvolution}. In \bibinfo{booktitle}{\emph{Neural
  computation}}.
\newblock


\bibitem[\protect\citeauthoryear{Benesty, Chen, Huang, and Cohen}{Benesty
  et~al\mbox{.}}{2009}]%
        {benesty2009pearson}
\bibfield{author}{\bibinfo{person}{Jacob Benesty}, \bibinfo{person}{Jingdong
  Chen}, \bibinfo{person}{Yiteng Huang}, {and} \bibinfo{person}{Israel Cohen}.}
  \bibinfo{year}{2009}\natexlab{}.
\newblock \showarticletitle{Pearson correlation coefficient}.
\newblock In \bibinfo{booktitle}{\emph{Noise reduction in speech processing}}.
\newblock


\bibitem[\protect\citeauthoryear{Bruna, Zaremba, Szlam, and LeCun}{Bruna
  et~al\mbox{.}}{2014}]%
        {bruna2013spectral}
\bibfield{author}{\bibinfo{person}{Joan Bruna}, \bibinfo{person}{Wojciech
  Zaremba}, \bibinfo{person}{Arthur Szlam}, {and} \bibinfo{person}{Yann
  LeCun}.} \bibinfo{year}{2014}\natexlab{}.
\newblock \showarticletitle{Spectral Networks and Locally Connected Networks on
  Graphs}. In \bibinfo{booktitle}{\emph{ICLR}}.
\newblock


\bibitem[\protect\citeauthoryear{Chen, Lin, Li, Li, Zhou, and Sun}{Chen
  et~al\mbox{.}}{2020a}]%
        {chen2020measuring}
\bibfield{author}{\bibinfo{person}{Deli Chen}, \bibinfo{person}{Yankai Lin},
  \bibinfo{person}{Wei Li}, \bibinfo{person}{Peng Li}, \bibinfo{person}{Jie
  Zhou}, {and} \bibinfo{person}{Xu Sun}.} \bibinfo{year}{2020}\natexlab{a}.
\newblock \showarticletitle{Measuring and Relieving the Over-Smoothing Problem
  for Graph Neural Networks from the Topological View}. In
  \bibinfo{booktitle}{\emph{AAAI}}.
\newblock


\bibitem[\protect\citeauthoryear{Chen, Wei, Huang, Ding, and Li}{Chen
  et~al\mbox{.}}{2020b}]%
        {chen2020simple}
\bibfield{author}{\bibinfo{person}{Ming Chen}, \bibinfo{person}{Zhewei Wei},
  \bibinfo{person}{Zengfeng Huang}, \bibinfo{person}{Bolin Ding}, {and}
  \bibinfo{person}{Yaliang Li}.} \bibinfo{year}{2020}\natexlab{b}.
\newblock \showarticletitle{Simple and Deep Graph Convolutional Networks}. In
  \bibinfo{booktitle}{\emph{ICML}}.
\newblock


\bibitem[\protect\citeauthoryear{Cogswell, Ahmed, Girshick, Zitnick, and
  Batra}{Cogswell et~al\mbox{.}}{2016}]%
        {cogswell2015reducing}
\bibfield{author}{\bibinfo{person}{Michael Cogswell}, \bibinfo{person}{Faruk
  Ahmed}, \bibinfo{person}{Ross~B. Girshick}, \bibinfo{person}{Larry Zitnick},
  {and} \bibinfo{person}{Dhruv Batra}.} \bibinfo{year}{2016}\natexlab{}.
\newblock \showarticletitle{Reducing Overfitting in Deep Networks by
  Decorrelating Representations}. In \bibinfo{booktitle}{\emph{ICLR}}.
\newblock


\bibitem[\protect\citeauthoryear{Defferrard, Bresson, and
  Vandergheynst}{Defferrard et~al\mbox{.}}{2016}]%
        {ChebNet}
\bibfield{author}{\bibinfo{person}{Micha{\"{e}}l Defferrard},
  \bibinfo{person}{Xavier Bresson}, {and} \bibinfo{person}{Pierre
  Vandergheynst}.} \bibinfo{year}{2016}\natexlab{}.
\newblock \showarticletitle{Convolutional Neural Networks on Graphs with Fast
  Localized Spectral Filtering}. In \bibinfo{booktitle}{\emph{NeurIPS}}.
\newblock


\bibitem[\protect\citeauthoryear{Ding, Xu, Tong, and Liu}{Ding
  et~al\mbox{.}}{2022}]%
        {ding2022data}
\bibfield{author}{\bibinfo{person}{Kaize Ding}, \bibinfo{person}{Zhe Xu},
  \bibinfo{person}{Hanghang Tong}, {and} \bibinfo{person}{Huan Liu}.}
  \bibinfo{year}{2022}\natexlab{}.
\newblock \showarticletitle{Data augmentation for deep graph learning: A
  survey}.
\newblock \bibinfo{journal}{\emph{arXiv preprint arXiv:2202.08235}}
  (\bibinfo{year}{2022}).
\newblock


\bibitem[\protect\citeauthoryear{Duvenaud, Maclaurin, Aguilera{-}Iparraguirre,
  G{\'{o}}mez{-}Bombarelli, Hirzel, Aspuru{-}Guzik, and Adams}{Duvenaud
  et~al\mbox{.}}{2015}]%
        {duvenaud2015convolutional}
\bibfield{author}{\bibinfo{person}{David Duvenaud}, \bibinfo{person}{Dougal
  Maclaurin}, \bibinfo{person}{Jorge Aguilera{-}Iparraguirre},
  \bibinfo{person}{Rafael G{\'{o}}mez{-}Bombarelli}, \bibinfo{person}{Timothy
  Hirzel}, \bibinfo{person}{Al{\'{a}}n Aspuru{-}Guzik}, {and}
  \bibinfo{person}{Ryan~P. Adams}.} \bibinfo{year}{2015}\natexlab{}.
\newblock \showarticletitle{Convolutional Networks on Graphs for Learning
  Molecular Fingerprints}. In \bibinfo{booktitle}{\emph{NeurIPS}}.
\newblock


\bibitem[\protect\citeauthoryear{Fan, Ma, Li, He, Zhao, Tang, and Yin}{Fan
  et~al\mbox{.}}{2019}]%
        {fan2019graph}
\bibfield{author}{\bibinfo{person}{Wenqi Fan}, \bibinfo{person}{Yao Ma},
  \bibinfo{person}{Qing Li}, \bibinfo{person}{Yuan He},
  \bibinfo{person}{Yihong~Eric Zhao}, \bibinfo{person}{Jiliang Tang}, {and}
  \bibinfo{person}{Dawei Yin}.} \bibinfo{year}{2019}\natexlab{}.
\newblock \showarticletitle{Graph Neural Networks for Social Recommendation}.
  In \bibinfo{booktitle}{\emph{WWW}}.
\newblock


\bibitem[\protect\citeauthoryear{Fey and Lenssen}{Fey and Lenssen}{2019}]%
        {fey2019fast-pyg}
\bibfield{author}{\bibinfo{person}{Matthias Fey} {and}
  \bibinfo{person}{Jan~Eric Lenssen}.} \bibinfo{year}{2019}\natexlab{}.
\newblock \showarticletitle{Fast graph representation learning with PyTorch
  Geometric}.
\newblock \bibinfo{journal}{\emph{ArXiv preprint}} (\bibinfo{year}{2019}).
\newblock


\bibitem[\protect\citeauthoryear{Gilmer, Schoenholz, Riley, Vinyals, and
  Dahl}{Gilmer et~al\mbox{.}}{2017}]%
        {gilmer2017neural}
\bibfield{author}{\bibinfo{person}{Justin Gilmer}, \bibinfo{person}{Samuel~S.
  Schoenholz}, \bibinfo{person}{Patrick~F. Riley}, \bibinfo{person}{Oriol
  Vinyals}, {and} \bibinfo{person}{George~E. Dahl}.}
  \bibinfo{year}{2017}\natexlab{}.
\newblock \showarticletitle{Neural Message Passing for Quantum Chemistry}. In
  \bibinfo{booktitle}{\emph{ICML}}.
\newblock


\bibitem[\protect\citeauthoryear{Guo, Zhang, Yu, Herr, Wiest, Jiang, and
  Chawla}{Guo et~al\mbox{.}}{2021}]%
        {guo2021few}
\bibfield{author}{\bibinfo{person}{Zhichun Guo}, \bibinfo{person}{Chuxu Zhang},
  \bibinfo{person}{Wenhao Yu}, \bibinfo{person}{John Herr},
  \bibinfo{person}{Olaf Wiest}, \bibinfo{person}{Meng Jiang}, {and}
  \bibinfo{person}{Nitesh~V Chawla}.} \bibinfo{year}{2021}\natexlab{}.
\newblock \showarticletitle{Few-shot graph learning for molecular property
  prediction}. In \bibinfo{booktitle}{\emph{Proceedings of the Web Conference
  2021}}.
\newblock


\bibitem[\protect\citeauthoryear{Hamilton, Ying, and Leskovec}{Hamilton
  et~al\mbox{.}}{2017}]%
        {hamilton2017inductive}
\bibfield{author}{\bibinfo{person}{William~L. Hamilton},
  \bibinfo{person}{Zhitao Ying}, {and} \bibinfo{person}{Jure Leskovec}.}
  \bibinfo{year}{2017}\natexlab{}.
\newblock \showarticletitle{Inductive Representation Learning on Large Graphs}.
  In \bibinfo{booktitle}{\emph{NeurIPS}}.
\newblock


\bibitem[\protect\citeauthoryear{Henaff, Bruna, and LeCun}{Henaff
  et~al\mbox{.}}{2015}]%
        {henaff2015deep}
\bibfield{author}{\bibinfo{person}{Mikael Henaff}, \bibinfo{person}{Joan
  Bruna}, {and} \bibinfo{person}{Yann LeCun}.} \bibinfo{year}{2015}\natexlab{}.
\newblock \showarticletitle{Deep convolutional networks on graph-structured
  data}.
\newblock \bibinfo{journal}{\emph{ArXiv preprint}} (\bibinfo{year}{2015}).
\newblock


\bibitem[\protect\citeauthoryear{Ioffe and Szegedy}{Ioffe and Szegedy}{2015}]%
        {ioffe2015batch}
\bibfield{author}{\bibinfo{person}{Sergey Ioffe} {and}
  \bibinfo{person}{Christian Szegedy}.} \bibinfo{year}{2015}\natexlab{}.
\newblock \showarticletitle{Batch Normalization: Accelerating Deep Network
  Training by Reducing Internal Covariate Shift}. In
  \bibinfo{booktitle}{\emph{ICML}}.
\newblock


\bibitem[\protect\citeauthoryear{Jin, Liu, Zhao, Ma, Shah, and Tang}{Jin
  et~al\mbox{.}}{2022}]%
        {jin2022automated}
\bibfield{author}{\bibinfo{person}{Wei Jin}, \bibinfo{person}{Xiaorui Liu},
  \bibinfo{person}{Xiangyu Zhao}, \bibinfo{person}{Yao Ma},
  \bibinfo{person}{Neil Shah}, {and} \bibinfo{person}{Jiliang Tang}.}
  \bibinfo{year}{2022}\natexlab{}.
\newblock \showarticletitle{Automated Self-Supervised Learning for Graphs}. In
  \bibinfo{booktitle}{\emph{ICLR}}.
\newblock


\bibitem[\protect\citeauthoryear{Kingma and Ba}{Kingma and Ba}{2015}]%
        {kingmaadam}
\bibfield{author}{\bibinfo{person}{Diederik~P. Kingma} {and}
  \bibinfo{person}{Jimmy Ba}.} \bibinfo{year}{2015}\natexlab{}.
\newblock \showarticletitle{Adam: {A} Method for Stochastic Optimization}. In
  \bibinfo{booktitle}{\emph{ICLR}}.
\newblock


\bibitem[\protect\citeauthoryear{Kipf and Welling}{Kipf and Welling}{2017}]%
        {kipf2016semi}
\bibfield{author}{\bibinfo{person}{Thomas~N. Kipf} {and} \bibinfo{person}{Max
  Welling}.} \bibinfo{year}{2017}\natexlab{}.
\newblock \showarticletitle{Semi-Supervised Classification with Graph
  Convolutional Networks}. In \bibinfo{booktitle}{\emph{ICLR}}.
\newblock


\bibitem[\protect\citeauthoryear{Li, M{\"{u}}ller, Ghanem, and Koltun}{Li
  et~al\mbox{.}}{2021}]%
        {li2021training}
\bibfield{author}{\bibinfo{person}{Guohao Li}, \bibinfo{person}{Matthias
  M{\"{u}}ller}, \bibinfo{person}{Bernard Ghanem}, {and}
  \bibinfo{person}{Vladlen Koltun}.} \bibinfo{year}{2021}\natexlab{}.
\newblock \showarticletitle{Training Graph Neural Networks with 1000 Layers}.
  In \bibinfo{booktitle}{\emph{ICML}}.
\newblock


\bibitem[\protect\citeauthoryear{Li, M{\"{u}}ller, Thabet, and Ghanem}{Li
  et~al\mbox{.}}{2019}]%
        {li2019deepgcns}
\bibfield{author}{\bibinfo{person}{Guohao Li}, \bibinfo{person}{Matthias
  M{\"{u}}ller}, \bibinfo{person}{Ali~K. Thabet}, {and}
  \bibinfo{person}{Bernard Ghanem}.} \bibinfo{year}{2019}\natexlab{}.
\newblock \showarticletitle{DeepGCNs: Can GCNs Go As Deep As CNNs?}. In
  \bibinfo{booktitle}{\emph{ICCV}}.
\newblock


\bibitem[\protect\citeauthoryear{Li, Han, and Wu}{Li et~al\mbox{.}}{2018}]%
        {li2018deeper-oversmooth}
\bibfield{author}{\bibinfo{person}{Qimai Li}, \bibinfo{person}{Zhichao Han},
  {and} \bibinfo{person}{Xiao{-}Ming Wu}.} \bibinfo{year}{2018}\natexlab{}.
\newblock \showarticletitle{Deeper Insights Into Graph Convolutional Networks
  for Semi-Supervised Learning}. In \bibinfo{booktitle}{\emph{AAAI}}.
\newblock


\bibitem[\protect\citeauthoryear{Liu, Gao, and Ji}{Liu et~al\mbox{.}}{2020}]%
        {liu2020towards-oversmooth}
\bibfield{author}{\bibinfo{person}{Meng Liu}, \bibinfo{person}{Hongyang Gao},
  {and} \bibinfo{person}{Shuiwang Ji}.} \bibinfo{year}{2020}\natexlab{}.
\newblock \showarticletitle{Towards Deeper Graph Neural Networks}. In
  \bibinfo{booktitle}{\emph{KDD}}.
\newblock


\bibitem[\protect\citeauthoryear{Liu, Jin, Ma, Li, Liu, Wang, Yan, and
  Tang}{Liu et~al\mbox{.}}{2021}]%
        {liu2021elastic}
\bibfield{author}{\bibinfo{person}{Xiaorui Liu}, \bibinfo{person}{Wei Jin},
  \bibinfo{person}{Yao Ma}, \bibinfo{person}{Yaxin Li}, \bibinfo{person}{Hua
  Liu}, \bibinfo{person}{Yiqi Wang}, \bibinfo{person}{Ming Yan}, {and}
  \bibinfo{person}{Jiliang Tang}.} \bibinfo{year}{2021}\natexlab{}.
\newblock \showarticletitle{Elastic Graph Neural Networks}. In
  \bibinfo{booktitle}{\emph{ICML}}.
\newblock


\bibitem[\protect\citeauthoryear{Marcheggiani, Bastings, and
  Titov}{Marcheggiani et~al\mbox{.}}{2018}]%
        {marcheggiani2018exploiting}
\bibfield{author}{\bibinfo{person}{Diego Marcheggiani},
  \bibinfo{person}{Jasmijn Bastings}, {and} \bibinfo{person}{Ivan Titov}.}
  \bibinfo{year}{2018}\natexlab{}.
\newblock \showarticletitle{Exploiting Semantics in Neural Machine Translation
  with Graph Convolutional Networks}. In \bibinfo{booktitle}{\emph{NAACL}}.
\newblock


\bibitem[\protect\citeauthoryear{Oono and Suzuki}{Oono and Suzuki}{2020}]%
        {oono2019graph}
\bibfield{author}{\bibinfo{person}{Kenta Oono} {and} \bibinfo{person}{Taiji
  Suzuki}.} \bibinfo{year}{2020}\natexlab{}.
\newblock \showarticletitle{Graph Neural Networks Exponentially Lose Expressive
  Power for Node Classification}. In \bibinfo{booktitle}{\emph{ICLR}}.
\newblock


\bibitem[\protect\citeauthoryear{Paninski}{Paninski}{2003}]%
        {paninski2003estimation}
\bibfield{author}{\bibinfo{person}{Liam Paninski}.}
  \bibinfo{year}{2003}\natexlab{}.
\newblock \showarticletitle{Estimation of entropy and mutual information}.
\newblock \bibinfo{journal}{\emph{Neural computation}} \bibinfo{number}{6}
  (\bibinfo{year}{2003}).
\newblock


\bibitem[\protect\citeauthoryear{Pei, Wei, Chang, Lei, and Yang}{Pei
  et~al\mbox{.}}{2020}]%
        {pei2020geom}
\bibfield{author}{\bibinfo{person}{Hongbin Pei}, \bibinfo{person}{Bingzhe Wei},
  \bibinfo{person}{Kevin~Chen{-}Chuan Chang}, \bibinfo{person}{Yu Lei}, {and}
  \bibinfo{person}{Bo Yang}.} \bibinfo{year}{2020}\natexlab{}.
\newblock \showarticletitle{Geom-GCN: Geometric Graph Convolutional Networks}.
  In \bibinfo{booktitle}{\emph{ICLR}}.
\newblock


\bibitem[\protect\citeauthoryear{Rashid, Karypis, and Riedl}{Rashid
  et~al\mbox{.}}{2008}]%
        {rashid2008learning}
\bibfield{author}{\bibinfo{person}{Al~Mamunur Rashid}, \bibinfo{person}{George
  Karypis}, {and} \bibinfo{person}{John Riedl}.}
  \bibinfo{year}{2008}\natexlab{}.
\newblock \showarticletitle{Learning preferences of new users in recommender
  systems: an information theoretic approach}.
\newblock \bibinfo{journal}{\emph{Acm Sigkdd Explorations Newsletter}}
  \bibinfo{number}{2} (\bibinfo{year}{2008}).
\newblock


\bibitem[\protect\citeauthoryear{Rodr{\'{\i}}guez, Gonz{\`{a}}lez, Cucurull,
  Gonfaus, and Roca}{Rodr{\'{\i}}guez et~al\mbox{.}}{2017}]%
        {rodriguez2016regularizing}
\bibfield{author}{\bibinfo{person}{Pau Rodr{\'{\i}}guez},
  \bibinfo{person}{Jordi Gonz{\`{a}}lez}, \bibinfo{person}{Guillem Cucurull},
  \bibinfo{person}{Josep~M. Gonfaus}, {and} \bibinfo{person}{F.~Xavier Roca}.}
  \bibinfo{year}{2017}\natexlab{}.
\newblock \showarticletitle{Regularizing CNNs with Locally Constrained
  Decorrelations}. In \bibinfo{booktitle}{\emph{ICLR}}.
\newblock


\bibitem[\protect\citeauthoryear{Rong, Huang, Xu, and Huang}{Rong
  et~al\mbox{.}}{2020}]%
        {rong2019dropedge}
\bibfield{author}{\bibinfo{person}{Yu Rong}, \bibinfo{person}{Wenbing Huang},
  \bibinfo{person}{Tingyang Xu}, {and} \bibinfo{person}{Junzhou Huang}.}
  \bibinfo{year}{2020}\natexlab{}.
\newblock \showarticletitle{DropEdge: Towards Deep Graph Convolutional Networks
  on Node Classification}. In \bibinfo{booktitle}{\emph{ICLR}}.
\newblock


\bibitem[\protect\citeauthoryear{Sen, Namata, Bilgic, Getoor, Galligher, and
  Eliassi-Rad}{Sen et~al\mbox{.}}{2008}]%
        {sen2008collective}
\bibfield{author}{\bibinfo{person}{Prithviraj Sen}, \bibinfo{person}{Galileo
  Namata}, \bibinfo{person}{Mustafa Bilgic}, \bibinfo{person}{Lise Getoor},
  \bibinfo{person}{Brian Galligher}, {and} \bibinfo{person}{Tina Eliassi-Rad}.}
  \bibinfo{year}{2008}\natexlab{}.
\newblock \showarticletitle{Collective classification in network data}.
\newblock \bibinfo{journal}{\emph{AI magazine}} \bibinfo{number}{3}
  (\bibinfo{year}{2008}).
\newblock


\bibitem[\protect\citeauthoryear{Shchur, Mumme, Bojchevski, and
  G{\"u}nnemann}{Shchur et~al\mbox{.}}{2018}]%
        {shchur2018pitfalls}
\bibfield{author}{\bibinfo{person}{Oleksandr Shchur},
  \bibinfo{person}{Maximilian Mumme}, \bibinfo{person}{Aleksandar Bojchevski},
  {and} \bibinfo{person}{Stephan G{\"u}nnemann}.}
  \bibinfo{year}{2018}\natexlab{}.
\newblock \showarticletitle{Pitfalls of graph neural network evaluation}.
\newblock \bibinfo{journal}{\emph{ArXiv preprint}} (\bibinfo{year}{2018}).
\newblock


\bibitem[\protect\citeauthoryear{Shuman, Narang, Frossard, Ortega, and
  Vandergheynst}{Shuman et~al\mbox{.}}{2013}]%
        {shuman2013emerging}
\bibfield{author}{\bibinfo{person}{David~I Shuman}, \bibinfo{person}{Sunil~K
  Narang}, \bibinfo{person}{Pascal Frossard}, \bibinfo{person}{Antonio Ortega},
  {and} \bibinfo{person}{Pierre Vandergheynst}.}
  \bibinfo{year}{2013}\natexlab{}.
\newblock \showarticletitle{The emerging field of signal processing on graphs:
  Extending high-dimensional data analysis to networks and other irregular
  domains}.
\newblock \bibinfo{journal}{\emph{IEEE signal processing magazine}}
  \bibinfo{number}{3} (\bibinfo{year}{2013}).
\newblock


\bibitem[\protect\citeauthoryear{Velickovic, Cucurull, Casanova, Romero,
  Li{\`{o}}, and Bengio}{Velickovic et~al\mbox{.}}{2018}]%
        {gat}
\bibfield{author}{\bibinfo{person}{Petar Velickovic}, \bibinfo{person}{Guillem
  Cucurull}, \bibinfo{person}{Arantxa Casanova}, \bibinfo{person}{Adriana
  Romero}, \bibinfo{person}{Pietro Li{\`{o}}}, {and} \bibinfo{person}{Yoshua
  Bengio}.} \bibinfo{year}{2018}\natexlab{}.
\newblock \showarticletitle{Graph Attention Networks}. In
  \bibinfo{booktitle}{\emph{ICLR}}.
\newblock


\bibitem[\protect\citeauthoryear{Wang, Jin, and Derr}{Wang
  et~al\mbox{.}}{2022a}]%
        {wang2022graph}
\bibfield{author}{\bibinfo{person}{Yu Wang}, \bibinfo{person}{Wei Jin}, {and}
  \bibinfo{person}{Tyler Derr}.} \bibinfo{year}{2022}\natexlab{a}.
\newblock \showarticletitle{Graph Neural Networks: Self-supervised Learning}.
\newblock In \bibinfo{booktitle}{\emph{Graph Neural Networks: Foundations,
  Frontiers, and Applications}}. \bibinfo{publisher}{Springer},
  \bibinfo{pages}{391--420}.
\newblock


\bibitem[\protect\citeauthoryear{Wang, Zhao, Dong, Chen, Li, and Derr}{Wang
  et~al\mbox{.}}{2022b}]%
        {wang2022improving}
\bibfield{author}{\bibinfo{person}{Yu Wang}, \bibinfo{person}{Yuying Zhao},
  \bibinfo{person}{Yushun Dong}, \bibinfo{person}{Huiyuan Chen},
  \bibinfo{person}{Jundong Li}, {and} \bibinfo{person}{Tyler Derr}.}
  \bibinfo{year}{2022}\natexlab{b}.
\newblock \showarticletitle{Improving Fairness in Graph Neural Networks via
  Mitigating Sensitive Attribute Leakage}. In \bibinfo{booktitle}{\emph{KDD}}.
\newblock


\bibitem[\protect\citeauthoryear{Wen, Ding, Jin, Wang, Xie, and Tang}{Wen
  et~al\mbox{.}}{2022}]%
        {wen2022graph}
\bibfield{author}{\bibinfo{person}{Hongzhi Wen}, \bibinfo{person}{Jiayuan
  Ding}, \bibinfo{person}{Wei Jin}, \bibinfo{person}{Yiqi Wang},
  \bibinfo{person}{Yuying Xie}, {and} \bibinfo{person}{Jiliang Tang}.}
  \bibinfo{year}{2022}\natexlab{}.
\newblock \showarticletitle{Graph Neural Networks for Multimodal Single-Cell
  Data Integration}. In \bibinfo{booktitle}{\emph{KDD}}.
\newblock


\bibitem[\protect\citeauthoryear{Wu, Pan, Chen, Long, Zhang, and Yu}{Wu
  et~al\mbox{.}}{2019}]%
        {wu2019comprehensive-survey}
\bibfield{author}{\bibinfo{person}{Zonghan Wu}, \bibinfo{person}{Shirui Pan},
  \bibinfo{person}{Fengwen Chen}, \bibinfo{person}{Guodong Long},
  \bibinfo{person}{Chengqi Zhang}, {and} \bibinfo{person}{Philip~S Yu}.}
  \bibinfo{year}{2019}\natexlab{}.
\newblock \showarticletitle{A comprehensive survey on graph neural networks}.
\newblock \bibinfo{journal}{\emph{ArXiv preprint}} (\bibinfo{year}{2019}).
\newblock


\bibitem[\protect\citeauthoryear{Yan, Xiong, and Lin}{Yan
  et~al\mbox{.}}{2018}]%
        {yan2018spatial}
\bibfield{author}{\bibinfo{person}{Sijie Yan}, \bibinfo{person}{Yuanjun Xiong},
  {and} \bibinfo{person}{Dahua Lin}.} \bibinfo{year}{2018}\natexlab{}.
\newblock \showarticletitle{Spatial Temporal Graph Convolutional Networks for
  Skeleton-Based Action Recognition}. In \bibinfo{booktitle}{\emph{AAAI}}.
\newblock


\bibitem[\protect\citeauthoryear{You, Chen, Shen, and Wang}{You
  et~al\mbox{.}}{2021}]%
        {you2021graph}
\bibfield{author}{\bibinfo{person}{Yuning You}, \bibinfo{person}{Tianlong
  Chen}, \bibinfo{person}{Yang Shen}, {and} \bibinfo{person}{Zhangyang Wang}.}
  \bibinfo{year}{2021}\natexlab{}.
\newblock \showarticletitle{Graph Contrastive Learning Automated}. In
  \bibinfo{booktitle}{\emph{ICML}}.
\newblock


\bibitem[\protect\citeauthoryear{Zhao and Akoglu}{Zhao and Akoglu}{2020}]%
        {zhao2020pairnorm}
\bibfield{author}{\bibinfo{person}{Lingxiao Zhao} {and} \bibinfo{person}{Leman
  Akoglu}.} \bibinfo{year}{2020}\natexlab{}.
\newblock \showarticletitle{PairNorm: Tackling Oversmoothing in GNNs}. In
  \bibinfo{booktitle}{\emph{ICLR}}.
\newblock


\bibitem[\protect\citeauthoryear{Zhao, Liu, G{\"u}nnemann, and Jiang}{Zhao
  et~al\mbox{.}}{2022}]%
        {zhao2022graph}
\bibfield{author}{\bibinfo{person}{Tong Zhao}, \bibinfo{person}{Gang Liu},
  \bibinfo{person}{Stephan G{\"u}nnemann}, {and} \bibinfo{person}{Meng Jiang}.}
  \bibinfo{year}{2022}\natexlab{}.
\newblock \showarticletitle{Graph Data Augmentation for Graph Machine Learning:
  A Survey}.
\newblock \bibinfo{journal}{\emph{arXiv preprint arXiv:2202.08871}}
  (\bibinfo{year}{2022}).
\newblock


\bibitem[\protect\citeauthoryear{Zhao, Liu, Neves, Woodford, Jiang, and
  Shah}{Zhao et~al\mbox{.}}{2021}]%
        {zhao2021data}
\bibfield{author}{\bibinfo{person}{Tong Zhao}, \bibinfo{person}{Yozen Liu},
  \bibinfo{person}{Leonardo Neves}, \bibinfo{person}{Oliver Woodford},
  \bibinfo{person}{Meng Jiang}, {and} \bibinfo{person}{Neil Shah}.}
  \bibinfo{year}{2021}\natexlab{}.
\newblock \showarticletitle{Data augmentation for graph neural networks}. In
  \bibinfo{booktitle}{\emph{AAAI}}.
\newblock


\bibitem[\protect\citeauthoryear{Zhou, Cui, Zhang, Yang, Liu, and Sun}{Zhou
  et~al\mbox{.}}{2018}]%
        {zhou2018graph-survey}
\bibfield{author}{\bibinfo{person}{Jie Zhou}, \bibinfo{person}{Ganqu Cui},
  \bibinfo{person}{Zhengyan Zhang}, \bibinfo{person}{Cheng Yang},
  \bibinfo{person}{Zhiyuan Liu}, {and} \bibinfo{person}{Maosong Sun}.}
  \bibinfo{year}{2018}\natexlab{}.
\newblock \showarticletitle{Graph neural networks: A review of methods and
  applications}.
\newblock \bibinfo{journal}{\emph{ArXiv preprint}} (\bibinfo{year}{2018}).
\newblock


\bibitem[\protect\citeauthoryear{Zhou, Huang, Li, Zha, Chen, and Hu}{Zhou
  et~al\mbox{.}}{2020}]%
        {zhou2020towards}
\bibfield{author}{\bibinfo{person}{Kaixiong Zhou}, \bibinfo{person}{Xiao
  Huang}, \bibinfo{person}{Yuening Li}, \bibinfo{person}{Daochen Zha},
  \bibinfo{person}{Rui Chen}, {and} \bibinfo{person}{Xia Hu}.}
  \bibinfo{year}{2020}\natexlab{}.
\newblock \showarticletitle{Towards Deeper Graph Neural Networks with
  Differentiable Group Normalization}. In \bibinfo{booktitle}{\emph{NeurIPS}}.
\newblock


\bibitem[\protect\citeauthoryear{Zitnik, Agrawal, and Leskovec}{Zitnik
  et~al\mbox{.}}{2018}]%
        {zitnik2018modeling}
\bibfield{author}{\bibinfo{person}{Marinka Zitnik}, \bibinfo{person}{Monica
  Agrawal}, {and} \bibinfo{person}{Jure Leskovec}.}
  \bibinfo{year}{2018}\natexlab{}.
\newblock \showarticletitle{Modeling polypharmacy side effects with graph
  convolutional networks}.
\newblock \bibinfo{journal}{\emph{Bioinformatics}} \bibinfo{number}{13}
  (\bibinfo{year}{2018}).
\newblock


\end{thebibliography}

\newpage
\appendix

\section{Additional Experimental Results}
\label{appendix:exp_res}

\subsection{Preliminary Study}
\label{appendix:pre}
\textbf{Correlation and Smoothness w.r.t. GCN Layers.} We further plot the changes of $Corr$ and $SMV$ w.r.t. number of GCN layers on Pubmed and CoauthorCS in Figure~\ref{fig:appendix_pubmed_cs}. From the figure, we make two observations. (1) With the increase of number of layers, the $Corr$ value tends to increase while the $SMV$ does not change much on these two datasets. (2) The test accuracy drops with the increase of number of layers. Based on the two observations, we can conjecture that it is not oversmoothing but overcorrelation that causes the performance degradation in Pubmed and CoauthorCS.

\begin{figure}[h]%
     \centering
     \subfloat[Pubmed]{{\includegraphics[width=0.5\linewidth]{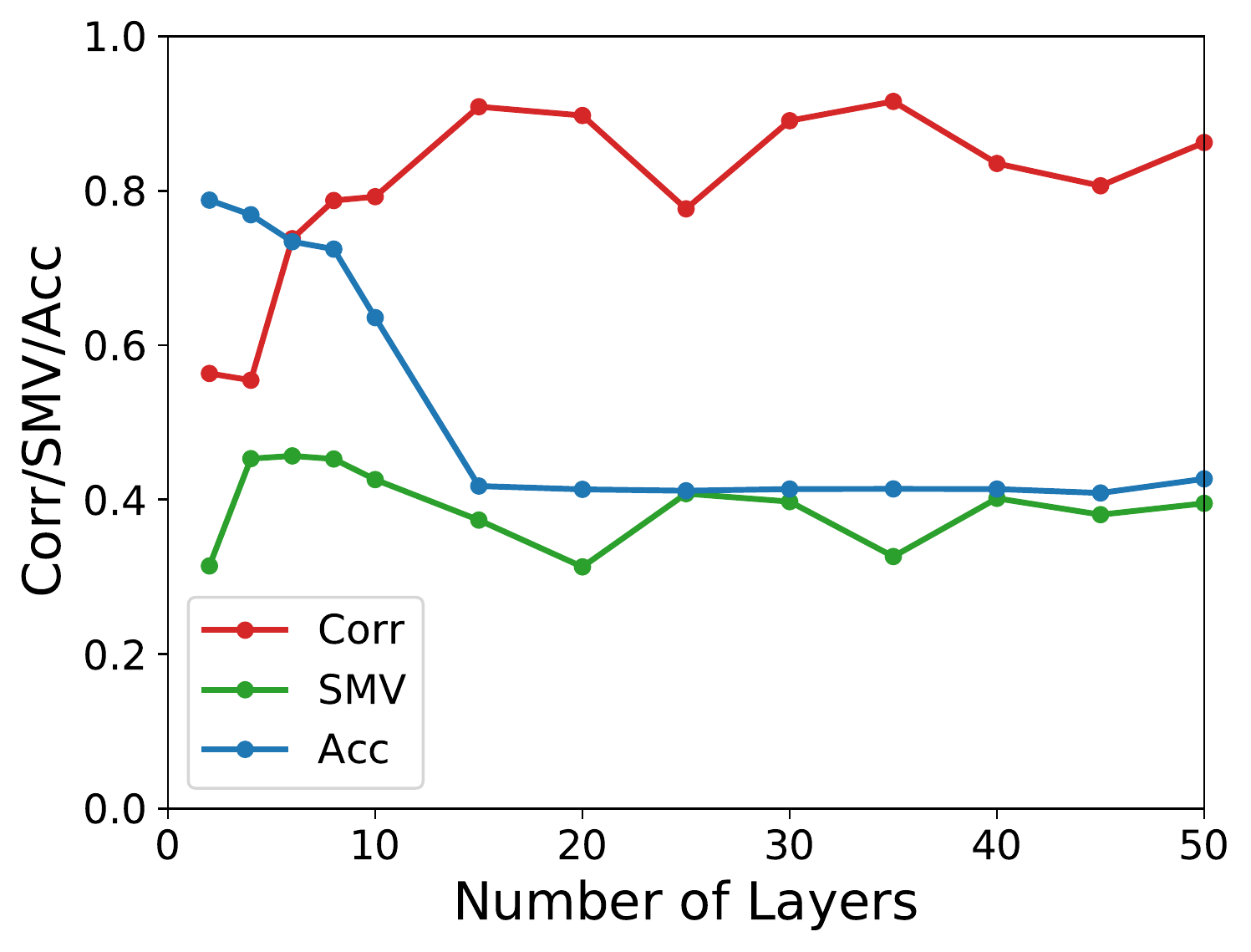}}}%
     \subfloat[CoauthorCS]{{\includegraphics[width=0.5\linewidth]{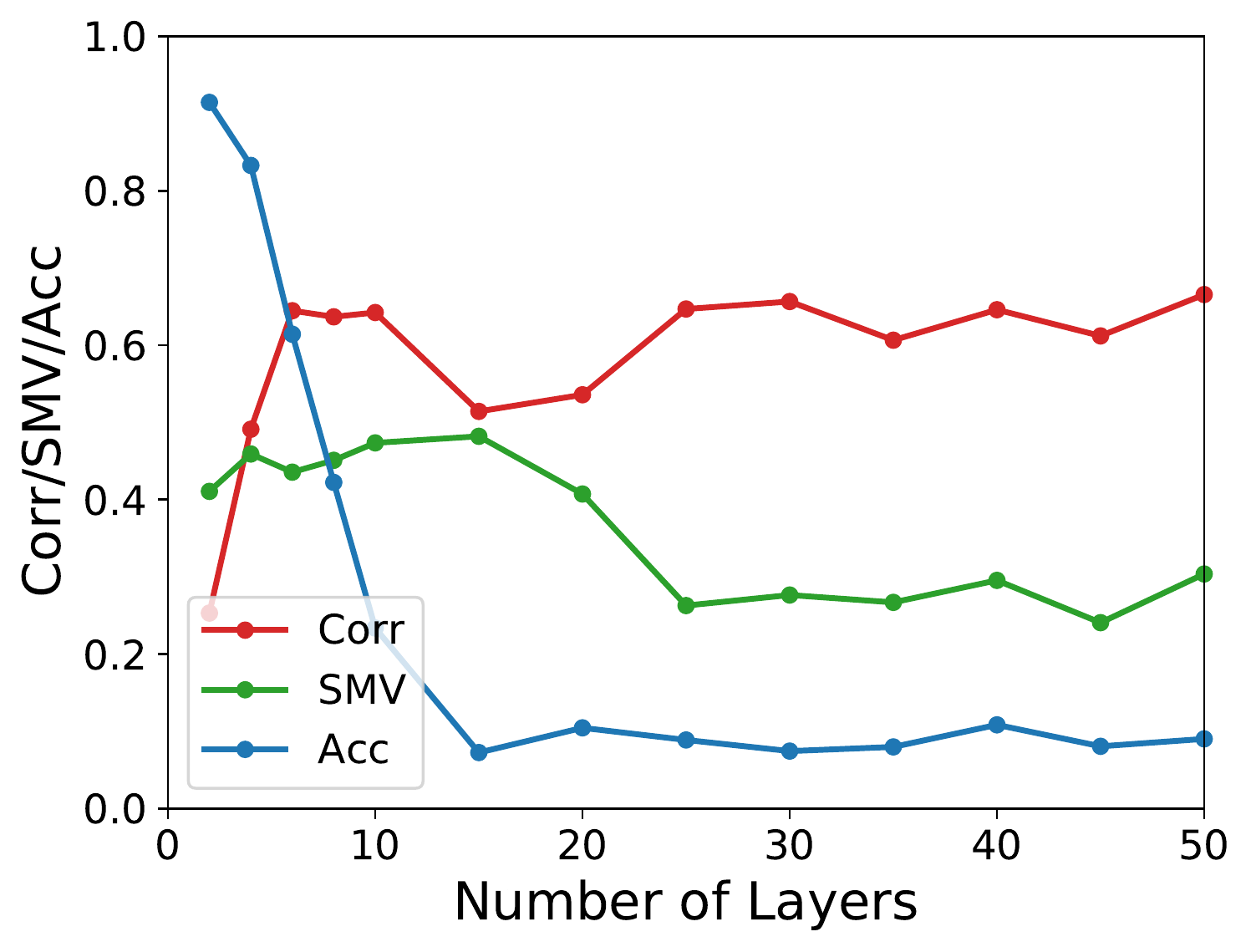}}}%
     \quad
\caption{$Corr$, $SMV$ and test accuracy w.r.t number of GCN layers.}
\label{fig:appendix_pubmed_cs}
\end{figure}

\subsection{Ablation Study}
 As shown in Figure~\ref{fig:ablation_pubmed}, we have similar observations as we made in Cora and Citeseer. Both DeCorr-$\alpha$ and DeCorr-$\beta$ can greatly improve the performance of 15-layer GCN, indicating that optimizing $\mathcal{L}_D$ and $\mathcal{L}_M$ can both help reduce feature correlation and boost the performance. Note that DeCorr-$\alpha $ and DeCorr-$\beta$ can achieve comparable performance on 15-layer GCN.  However, on 30-layer GCN,  DeCorr-$\beta$ does not bring as much improvement as DeCorr-$\alpha$ does on the four datasets. This observation suggests that when GNNs are very deep, explicit feature decorrelation ($\mathcal{L}_D$) is of more significance than the implicit method ($\mathcal{L}_M$). 
 
\begin{figure}[t]%
     \centering
     \subfloat[Pubmed]{{\includegraphics[width=0.5\linewidth]{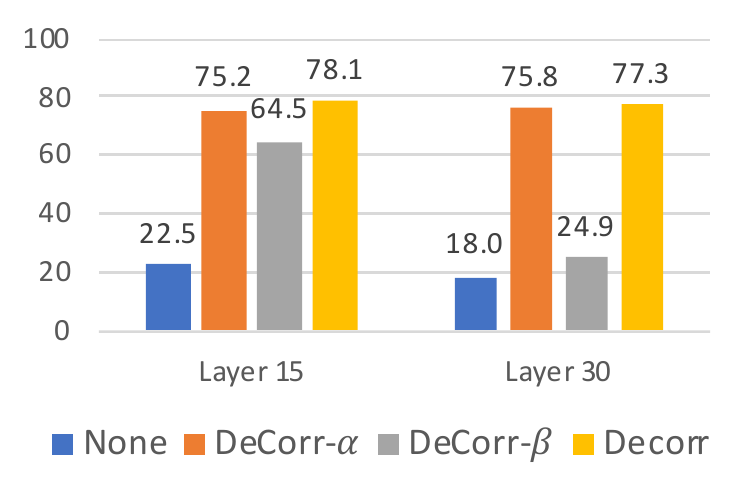}}}%
      \subfloat[CoauthorCS]{{\includegraphics[width=0.5\linewidth]{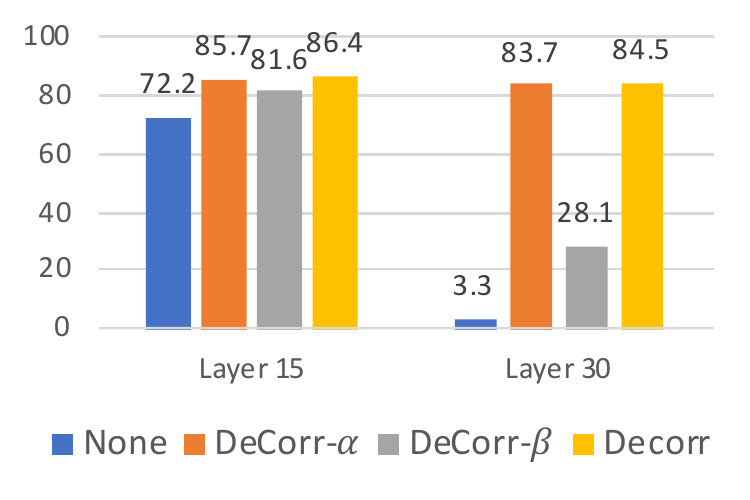} }}%
    \qquad
    \vskip -0.5em
\caption{Ablation study for Pubmed and Coauthors.}
\label{fig:ablation_pubmed}%
\end{figure}

\section{Experiment Setup}
\subsection{Dataset Statistics}
\label{sec:dataset}
The dataset statics is shown in Table~\ref{tab:dataset}. For the experiments on Cora/Citeer/Pubmed, we follow the widely used semi-supervised setting in~\cite{kipf2016semi,zhou2020towards} with 20 nodes per class for training, 500 nodes for validation and 1000 nodes for test. For CoauthorCS, we follow~\cite{zhou2020towards} and use 40 nodes per class for training, 150 nodes per class for validation and the rest for test. For other datasets, we follow~\cite{pei2020geom,chen2020simple} 
to randomly split nodes of each class into 60\%, 20\%, and 20\% for training, validation and test. 

\begin{table}[h]
\vskip -0.5em
\caption{Dataset Statistics.}
\vskip -1em
\begin{tabular}{@{}lccccc@{}}
\toprule
Datasets    & \#Nodes & \#Edges & \#Features & \#Classes  \\
\midrule 
Cora        & 2,708    & 5,429    & 1,433       & 7         \\
Citeseer    & 3,327    & 4,732    & 3,703       & 6         \\
Pubmed      & 19,717   & 44,338   & 500        & 3        \\ 
CoauthorCS  & 18,333    & 81894   & 6805       & 15 \\
Chameleon  & 2,277 & 36,101 & 2,325 & 5 \\
Actor       & 7,600    & 33,544   & 931        & 5         \\
Cornell     & 183     & 295     & 1,703       & 5         \\
Texas       & 183     & 309     & 1,703       & 5         \\
Wisconsin   & 251     & 499     & 1,703       & 5         \\ 
\bottomrule
\end{tabular}
\vspace{-1em}
\label{tab:dataset}
\end{table}

The datasets are publicly available at:
\begin{compactenum}[(1)]
    \item \textbf{Cora/Citeseer/Pubmed}:\\ https://github.com/tkipf/gcn/tree/master/gcn/data
    \item \textbf{CoauthorCS}: \\
    https://github.com/shchur/gnn-benchmark/tree/master/data/npz
    \item 
    \textbf{Others}:\\ https://github.com/graphdml-uiuc-jlu/geom-gcn/tree/master/\\new$\_$data
\end{compactenum}

\subsection{Parameter Settings}
\label{sec:param_setting}
 
\subsubsection{Experiments in Section~\ref{sec:mainresults}} For BN, PairNorm and DGN, we reuse the performance reported in~\cite{zhou2020towards} for GCN and GAT. For ChebyNet, we use their best configuration to run the experiments. For DropEdge, we tune the sampling percent from $\{0.1, 0.3, 0.5, 0.7\}$, weight decay from $\{0,$ 5$e$-4$\}$, dropout rate from $\{0, 0.6\}$ and fix the learning rate to be 0.005. For the proposed DeCorr, following~\cite{zhou2020towards} we use 5$e$-4 weight decay on Cora, 5$e$-5 weight decay on Citeseer and CoauthorCS, 1$e$-3 weight decay on Pubmed. We further search $\alpha$ from $\{0.1, 1\}$, $\beta$ from $\{1,10\}$, learning rate from $\{0.005,0.01,0.02\}$ and dropout rate from $\{0, 0.6\}$.

\subsubsection{Experiments in Section~5.3.1} For all methods, we tune the learning rate from $\{0.005, 0.01, 0.02\}$, dropout rate from $\{0, 0.6\}$.
For DeCorr, we tune $\alpha$ from $\{0.1, 1\}$, $\beta$ from $\{1,10\}$ while for DGN+ DeCorr we tune $\alpha$ from $\{0.05, 0.1, 0.5\}$ and $\beta$ from $\{0.1, 1\}$.

\subsubsection{Experiments in Section~5.3.2} For GCNII and GCNII*, we use their best configuration as reported in~\cite{chen2020simple}. Based on these configurations, we further equip them with DeCorr and tune $\alpha$ from $\{0.01, 0.05, 0.1\}$ and $\beta$ from $\{0.1,1,10\}$.

\subsubsection{Experiments in Section~5.4} For ablation study, we tune learning rate from $\{0.005, 0.01, 0.02\}$, dropout rate $\{0, 0.6\}$, $\alpha$ from $\{0.1,1\}$ and $\beta$ from $\{0.1,1,10\}$

\subsection{Baselines}
We use the following publicly available implementation of baseline methods and deep models:
\begin{itemize}
    \item \textbf{DGN}: https://github.com/Kaixiong-Zhou/DGN/
    \item \textbf{PairNorm}: https://github.com/Kaixiong-Zhou/DGN/
    \item \textbf{BatchNorm}: https://github.com/Kaixiong-Zhou/DGN/
    \item \textbf{DropEdge}: https://github.com/DropEdge/DropEdge
    \item \textbf{GCNII}: https://github.com/chennnM/GCNII/tree/master/PyG
    \item \textbf{APPNP}: https://github.com/rusty1s/pytorch$\_$geometric
\end{itemize}

\end{document}